\definecolor{veronica-red}{RGB}{196,30,58}
\definecolor{ForestGreen}{RGB}{34,139,34}
\definecolor{BrickRed}{rgb}{.72,0,0}
\definecolor{LakeBlue}{RGB}{0,61,153}
\definecolor{lightblue}{RGB}{68,14,196}
\definecolor{lightb}{RGB}{235,245,255}
\newcommand{\fmoon}{\textsuperscript{\fontsize{6pt}{6pt}\selectfont \faMoonO}}
\newtheorem{theorem}{Theorem}
\newtheorem{proof}{Proof}
\title{ARISE: An Adaptive Resolution-Aware Metric for Test-Time Scaling Evaluation in Large Reasoning Models}
\author{
Zhangyue Yin\textsuperscript{$\diamondsuit$}\quad 
Qiushi Sun\textsuperscript{$\heartsuit$} \quad 
Zhiyuan Zeng\textsuperscript{$\diamondsuit$} \quad 
Zhiyuan Yu\textsuperscript{$\spadesuit$} \\
\bf{
Qipeng Guo\textsuperscript{$\clubsuit$}\fmoon \quad
Xuanjing Huang\textsuperscript{$\diamondsuit$}\textsuperscript{\dag} \quad
Xipeng Qiu\textsuperscript{$\diamondsuit$}\fmoon \textsuperscript{\dag}
}\\
\textsuperscript{$\diamondsuit$}Fudan University \quad
\textsuperscript{$\heartsuit$}The University of Hong Kong \\
\textsuperscript{$\spadesuit$}Nanjing University \quad
\textsuperscript{$\clubsuit$}Shanghai AI Laboratory \quad
\fmoon Shanghai Innovation Institute\\
\texttt{\{yinzy21,cengzy23\}@m.fudan.edu.cn}\quad
\texttt{qiushisun@connect.hku.hk} \\
\texttt{zhiyuan\_yu@smail.nju.edu.cn}\quad
\texttt{guoqipeng@pjlab.org.cn} \\
\texttt{\{xpqiu,xjhuang\}@fudan.edu.cn}
}
\begin{document}
\maketitle

\begin{abstract}
Test-time scaling has emerged as a transformative paradigm for enhancing the performance of large reasoning models, enabling dynamic allocation of computational resources during inference. However, as the landscape of reasoning models rapidly expands, a critical question remains: how can we systematically compare and evaluate the test-time scaling capabilities across different models? In this paper, we introduce ARISE (Adaptive Resolution-aware Scaling Evaluation), a novel metric specifically designed to assess the test-time scaling effectiveness of large reasoning models. Unlike existing evaluation approaches, ARISE incorporates two key innovations: (1) sample-level awareness that effectively penalizes negative scaling behaviors where increased computation leads to performance degradation, 
and (2) a dynamic sampling mechanism that mitigates the impact of accuracy fluctuations and token count instability on the final assessment. We conduct comprehensive experiments evaluating state-of-the-art reasoning models across diverse domains including mathematical reasoning, code generation, and agentic tasks. Our results demonstrate that ARISE provides a reliable and fine-grained measurement of test-time scaling capabilities, revealing significant variations in scaling efficiency across models. Notably, our evaluation identifies Claude Opus as exhibiting superior scaling characteristics compared to other contemporary reasoning models.
\end{abstract}

\section{Introduction}
\label{sec:introduction}

Test-time scaling has emerged as a transformative paradigm in large reasoning models, enabling dynamic computational resource allocation during inference to enhance model performance~\citep{snell2025scaling,wu2025inference}. As an increasing number of models with test-time scaling capabilities are released~\citep{openai2024o1,openai2025gpt5,deepseekai2025deepseekr1,yang2025qwen3}, the ability to scale effectively at inference has become a critical dimension for evaluating model capabilities alongside traditional metrics~\citep{zhang2025survey}. However, systematically comparing test-time scaling effectiveness across diverse models presents significant methodological challenges.

\begin{figure}[t]
  \centering
  \includegraphics[width=0.49\textwidth]{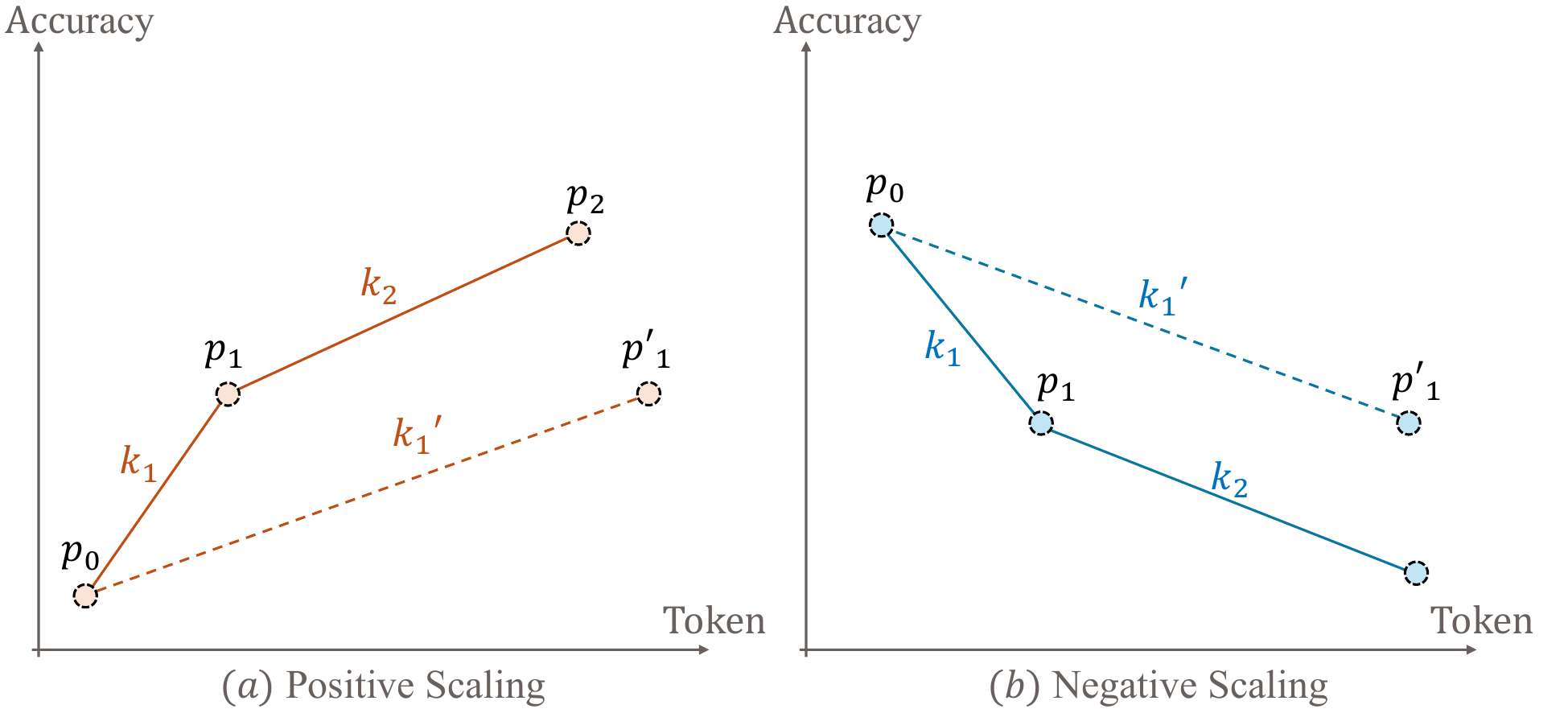}
    \caption{Limitations of slope-based metrics in test-time scaling evaluation. (a) When performance improves from $p_0$ to $p_1$ and $p_1'$, the steeper slope correctly rewards $p_1$ for achieving the same accuracy with fewer tokens. (b) When performance degrades, the slope metric incorrectly assigns a higher value to $p_1'$ despite it wasting more tokens for worse performance.}
  \label{fig:intro}
  \vspace{-1.5em}
\end{figure}

While scaling curves provide intuitive visualization of test-time scaling behavior, 
they lack the quantitative precision necessary for rigorous model comparison. Recent work by \citet{muennighoff2025s1simpletesttimescaling} proposed using slope-based metrics to quantify scaling capabilities. 
% However, this approach suffers from two fundamental limitations.
However, this approach entails two notable limitations.
First, it operates at the aggregate accuracy level, overlooking a core objective of test-time scaling: \textit{converting previously incorrect samples to correct ones}~\citep{chen2024simple}. This aggregate view fails to penalize samples that become incorrect after scaling. Second, as illustrated in Figure~\ref{fig:intro}, slope metrics exhibit pathological behavior under negative scaling scenarios. When performance improves (Figure~\ref{fig:intro}a), the metric correctly rewards models that achieve higher accuracy with fewer tokens. However, when performance degrades (Figure~\ref{fig:intro}b), the same metric paradoxically assigns higher scores to models that waste more tokens while achieving worse results, a clear misalignment with the intended objective.

Beyond these conceptual limitations, fair measurement of test-time scaling capabilities faces practical challenges. Large reasoning models typically require high sampling temperatures to explore diverse solution paths~\citep{yang2025qwen3}, introducing substantial variability in both token consumption and accuracy measurements. This inherent stochasticity makes it difficult to obtain stable, reproducible assessments of scaling behavior across different models and evaluation runs.

To address these challenges, we introduce ARISE (Adaptive Resolution-aware Scaling Evaluation), a novel metric specifically designed for robust evaluation of test-time scaling in large reasoning models. ARISE incorporates two key innovations: (1) \textbf{sample-level awareness} that tracks individual sample trajectories across scaling levels, effectively penalizing both samples that degrade after scaling and wasteful token consumption under negative scaling; and (2) \textbf{dynamic sampling mechanism} that adaptively adjusts sample-level evaluation runs based on observed variance in accuracy and token consumption, ensuring statistically reliable measurements. These design choices make ARISE a principled and stable metric for assessing test-time scaling capabilities.

We conduct comprehensive experiments evaluating state-of-the-art reasoning models across diverse domains, including mathematical reasoning, code generation, and agentic tasks. 
Our empirical analysis demonstrates that ARISE delivers consistent and fine-grained measurements of test-time scaling effectiveness, exposing substantial disparities in scaling efficiency across models. Notably, Claude Opus outperforms all contemporary models, attaining the highest ARISE scores while exhibiting robust stability across all evaluated task domains. Our contributions are summarized as follows:
\begin{itemize}
\item We introduce scaling efficiency as a critical dimension for evaluating reasoning model abilities and identify fundamental limitations in existing test-time scaling evaluation methods.
\item We propose ARISE, a novel evaluation metric that incorporates sample-level awareness and dynamic sampling mechanism to ensure statistically reliable measurements.
\item We present a comprehensive empirical evaluation across multiple domains, establishing ARISE as a reliable metric for comparing test-time scaling capabilities of reasoning models.
\end{itemize}

\section{Related Work}
\label{sec:related_works}

\paragraph{Test-Time Scaling} 
As training-time scaling approaches its computational and data limits~\citep{villalobos2024position}, test-time scaling has emerged as a promising new paradigm for advancing model capabilities~\citep{zhang2025survey,zeng2025revisiting, wu2025s}. Recent empirical studies have demonstrated that optimal test-time compute allocation can be more effective than simply scaling model parameters~\citep{snell2025scaling,wu2025inference}. The release of OpenAI's o1 model~\citep{openai2024o1} has catalyzed a surge of research into understanding and improving test-time scaling mechanisms~\citep{chen2024simple, hu2025openreasonerzeroopensourceapproach, deepscaler2025}. Major models including OpenAI GPT-5~\citep{openai2025gpt5}, Anthropic Claude~\citep{anthropic2025claude4}, DeepSeek-V3.1~\citep{deepseekai2024deepseekv3} have successfully integrated test-time scaling capabilities. Test-time scaling has been widely applied to a diverse range of complex tasks including mathematical reasoning~\citep{wang2025scaling, balachandran2025inference}, code generation~\citep{yu2025z1, li2025s}, and agentic tasks~\citep{zhu2025scaling, chakraborty2025rolefeedbacktesttimescaling}.

\paragraph{Scaling Evaluation} 
The evaluation of test-time scaling encompasses multiple dimensions that capture different aspects of model behavior. From a performance perspective, Pass@1 remains the most prevalent metric~\citep{yang2025qwen3}, serving as the standard benchmark for mathematical reasoning~\citep{aime,hendrycks2021measuring} and code generation tasks~\citep{raihan2024mhumaneval,jain2025livecodebench}. Extensions such as Pass@k and Cons@k~\citep{chen2021evaluating} provide models with multiple attempts, offering a more comprehensive view of their problem-solving capabilities~\citep{brown2024large,li2022competition}. 
From an efficiency standpoint, recent work has examined computational overhead through various lenses, including token consumption and step redundancy~\citep{luo2025o1,chiang2024reasoning,chen2024not}. 
Notably, \citet{wang2025thoughts} identified a critical phenomenon where reasoning models exhibit excessive trajectory switching during inference, leading to insufficient depth of exploration, a behavior they quantify through the proposed Underthinking Score. 
Another crucial dimension concerns controllability, the ability of models to consistently scale their reasoning process to predetermined computational budgets~\citep{bhargava2023s, muennighoff2025s1simpletesttimescaling,aggarwal2025l1}.

The most comprehensive approach to scaling evaluation employs scaling curves~\citep{wu2025inference,teng2025atom}, which simultaneously capture both accuracy improvements and computational efficiency. These curves visualize the trade-off between performance gains and resource utilization, providing insights into the marginal utility of additional computation. To quantify this relationship, the scaling metric~\citep{muennighoff2025s1simpletesttimescaling} computes the average gradient across all point pairs on the curve:
\begin{equation}
\text{Scaling} = \frac{1}{\binom{|\mathcal{P}|}{2}} \sum_{\substack{p_1, p_2 \in \mathcal{P} \\ \mathcal{T}(p_2) > \mathcal{T}(p_1)}} \frac{\mathcal{A}(p_2) - \mathcal{A}(p_1)}{\mathcal{T}(p_2) - \mathcal{T}(p_1)}
\label{eq:scaling_metric}
\end{equation}
where $\mathcal{P}$ represents the set of points on the scaling curve, and functions $\mathcal{A}(\cdot)$ and $\mathcal{T}(\cdot)$ denote the accuracy and token consumption at each point, respectively. This formulation provides a scalar measure of scaling efficiency but fails to capture sample-level variations and negative scaling behaviors that are crucial for effective evaluation.

\section{ARISE: Adaptive Resolution-Aware Metric}
\label{sec:arise}
We propose ARISE, a novel metric that addresses the limitations of existing test-time scaling evaluation approaches through sample-level error awareness and dynamic sampling mechanisms.

\subsection{Metric Design}
\label{subsec:metric_design}

For each sample $i$ in the evaluation dataset, we define $a_i^{(j)} \in \{0,1\}$ as the binary accuracy at scaling iteration $j$, and $t_i^{(j)}$ as the corresponding token consumption. The ARISE score for sample $i$ is computed as:

\begin{equation}
\text{ARISE}_i = \sum_{j=1}^{m} \Delta a_i^{(j)} \cdot W_i^{(j)}
\label{eq:arise_core}
\end{equation}
where $\Delta a_i^{(j)} = a_i^{(j)} - a_i^{(j-1)}$ represents the accuracy change, and the weight function $W_i^{(j)}$ is defined as:

\begin{equation}
W_i^{(j)} = \left(\frac{t_i^{(j-1)}}{t_i^{(j)}}\right)^{\text{sign}(\Delta a_i^{(j)})}
\label{eq:weight}
\end{equation}
The overall ARISE score aggregates individual sample scores:
\begin{equation}
\text{ARISE} = \frac{1}{n} \sum_{i=1}^{n} \text{ARISE}_i
\label{eq:arise_total}
\end{equation}

\paragraph{Sample-Level Awareness.}
ARISE evaluates scaling behavior at the granularity of individual samples by examining transitions between adjacent scaling iterations. For any pair of consecutive iterations $(j-1, j)$, the contribution to ARISE$_i$ depends on the accuracy transition:

\begin{align}
C_i^{(j)} = \begin{cases}
0 & \text{if } a_i^{(j)} = a_i^{(j-1)} \\
\frac{t_i^{(j-1)}}{t_i^{(j)}} & \text{if } a_i^{(j)} = 1, a_i^{(j-1)} = 0 \\
-\frac{t_i^{(j)}}{t_i^{(j-1)}} & \text{if } a_i^{(j)} = 0, a_i^{(j-1)} = 1
\end{cases}
\label{eq:cases}
\end{align}
This formulation ensures that ARISE captures the critical moment when a sample transitions from incorrect to correct, while penalizing degradations where additional computation leads to errors. Since $t_i^{(j)} > t_i^{(j-1)}$ by construction, the penalty magnitude $|\frac{t_i^{(j)}}{t_i^{(j-1)}}| > |\frac{t_i^{(j-1)}}{t_i^{(j)}}|$ exceeds the reward magnitude, reflecting the asymmetric cost of computational waste.

\paragraph{Negative Scaling Correction.}
When performance deteriorates ($\Delta a_i^{(j)} < 0$), the sign function in Equation~\ref{eq:weight} becomes $-1$, transforming the weight to:
\begin{equation}
W_i^{(j)} = \left(\frac{t_i^{(j-1)}}{t_i^{(j)}}\right)^{-1} = \frac{t_i^{(j)}}{t_i^{(j-1)}} > 1
\label{eq:negative_weight}
\end{equation}
This design amplifies penalties proportionally to token waste, which uses more computational resources for worse results receives progressively stronger penalties, directly addressing the fundamental limitation of existing metrics that fail to adequately penalize negative scaling behaviors.

\paragraph{Magnitude-Aware Design.}
Unlike conventional scaling metrics that employ absolute differences, ARISE utilizes ratios to enable relative scaling measurement adapted to problem difficulty. Consider a fixed token increment $\Delta t = 1000$:

For simple problems where $t_i^{(j-1)} = 1000$, an additional 1000 tokens doubles the computational budget, representing substantial additional reasoning capacity. Conversely, for complex problems where $t_i^{(j-1)} = 10000$, the same increment represents only a 10\% increase, likely insufficient for meaningful additional analysis. This ratio-based approach ensures that scaling effectiveness is measured relative to the baseline computational requirements, providing more accurate assessments across problems of varying complexity.

\paragraph{Non-Combinatorial Computation.}
ARISE employs adjacent-pair computation rather than exhaustive pairwise combinations, avoiding redundancy and computational complexity. Consider a sequence of four scaling iterations with accuracy pattern $(0, 1, 0, 1)$ and strictly increasing tokens $t_i^{(0)} < t_i^{(1)} < t_i^{(2)} < t_i^{(3)}$. 

The adjacent-pair approach yields:
\begin{equation}
\text{ARISE}_i = \frac{t_i^{(0)}}{t_i^{(1)}} - \frac{t_i^{(2)}}{t_i^{(1)}} + \frac{t_i^{(2)}}{t_i^{(3)}}
\label{eq:adjacent}
\end{equation}
whereas combinatorial computation would include an additional term $\frac{t_i^{(0)}}{t_i^{(3)}}$, inappropriately rewarding the direct transition from initial failure to final success while ignoring the intermediate regression. This spurious reward could exceed the penalty for the intermediate failure, demonstrating why adjacent-pair computation provides more reasonable scaling assessment.

\paragraph{Boundedness Properties.}
Unlike traditional scaling metrics with symmetric bounds, ARISE exhibits asymmetric bounds reflecting its design philosophy. When $a_i^{(j-1)} = 0$ and $a_i^{(j)} = 1$, as $t_i^{(j)} \to t_i^{(j-1)}$, $\text{ARISE}_i \to 1^{-}$. For degradation cases where $a_i^{(j-1)} = 1$ and $a_i^{(j)} = 0$:
\begin{equation}
C_i^{(j)} = -\frac{t_i^{(j)}}{t_i^{(j-1)}} < -1
\label{eq:lower_bound}
\end{equation}
Thus, ARISE $\in (-\infty, 1)$, though practical values typically remain within $(-1, 1)$ as extreme negative scaling is rare. The unbounded negative range ensures severe penalties for egregious computational waste, while the bounded positive range prevents over-rewarding improvements. We provide formal boundedness analysis in Appendix~\ref{app:boundedness}.

\subsection{Adaptive Sampling Strategy}
\label{subsec:adaptive_sampling}

Test-time scaling evaluation faces inherent variance in both accuracy outcomes and token consumption across trials. We introduce an adaptive sampling mechanism that dynamically allocates computational budget based on observed variance patterns, enhancing evaluation reliability.

\paragraph{Variance Characterization.}
For each sample $i$ at scaling iteration $j$, we conduct an initial probing phase with $m_{\text{min}}$ trials. Let $a_{i,k}^{(j)}$ and $t_{i,k}^{(j)}$ denote the accuracy and token consumption for trial $k$. We compute the empirical statistics:

\begin{align}
\mu_{a_i^{(j)}} &= \frac{1}{m_{\text{min}}} \sum_{k=1}^{m_{\text{min}}} a_{i,k}^{(j)} \\
\sigma_{a_i^{(j)}} &= \sqrt{\frac{1}{m_{\text{min}}} \sum_{k=1}^{m_{\text{min}}} (a_{i,k}^{(j)} - \mu_{a_i^{(j)}})^2}
\label{eq:stats}
\end{align}
with analogous definitions for token statistics $\mu_{t_i^{(j)}}$ and $\sigma_{t_i^{(j)}}$.

\paragraph{Normalized Variance Measure.}
To enable fair comparison across different scales and magnitudes, we employ the coefficient of variation (CV):

\begin{align}
\text{CV}_{a_i^{(j)}} &= \frac{\sigma_{a_i^{(j)}}}{\mu_{a_i^{(j)}} + \epsilon} \\
\text{CV}_{t_i^{(j)}} &= \frac{\sigma_{t_i^{(j)}}}{\mu_{t_i^{(j)}} + \epsilon}
\label{eq:cv}
\end{align}
where $\epsilon = 10^{-8}$ prevents division by zero. The combined variance indicator captures both dimensions:
\begin{equation}
\text{CV}_i^{(j)} = \text{CV}_{a_i^{(j)}} + \text{CV}_{t_i^{(j)}}
\label{eq:combined_cv}
\end{equation}

\begin{table*}[htbp]
\centering
\footnotesize
\begin{tabular}{l@{\hspace{0.3em}}rr@{\hspace{0.5em}}rr@{\hspace{0.5em}}rr@{\hspace{0.5em}}rr}
\toprule
\multirow{2}{*}{\textbf{Model}} & \multicolumn{2}{c}{\textbf{AIME}} & \multicolumn{2}{c}{\textbf{HMMT}} & \multicolumn{2}{c}{\textbf{GPQA Diamond}} & \multicolumn{2}{c}{\textbf{MMLU-Pro}} \\
\cmidrule(lr){2-3} \cmidrule(lr){4-5} \cmidrule(lr){6-7} \cmidrule(lr){8-9}
 & ARISE & SM$\times$1000 & ARISE & SM$\times$1000 & ARISE & SM$\times$1000 & ARISE & SM$\times$1000 \\
\midrule
o1                      & 0.1346 & 0.0607 & 0.1277 & 0.0183 & 0.1228 & 0.0385 & 0.1509 & 0.0504 \\
o3                      & 0.2993 & 0.0782 & 0.1673 & 0.0186 & 0.2124 & 0.0589 & 0.2789 & 0.0567 \\
o3-mini                 & 0.1306 & 0.0374 & 0.1888 & 0.0302 & 0.1649 & 0.0221 & 0.1663 & 0.0416 \\
o4-mini                 & 0.2402 & 0.0348 & 0.1673 & 0.0435 & 0.1994 & 0.0423 & 0.2080 & 0.0392 \\
\midrule
gpt-oss-20B            & -0.4030 & 0.0205 & -0.3126 & 0.0168 & -0.3274 & 0.0224 & -0.2694 & 0.0227 \\
gpt-oss-120B           & -0.3340 & 0.0273 & -0.1999 & 0.0241 & -0.2734 & 0.0286 & -0.1615 & 0.0314 \\
gpt-5                  & 0.1566 & 0.0259 & 0.2996 & 0.0265 & 0.2185 & 0.0263 & 0.3186 & 0.0298 \\
\midrule
Claude Sonnet 4        & 0.1041 & 0.0461 & 0.0404 & 0.0109 & 0.0636 & 0.0326 & 0.1059 & 0.0264 \\
Claude Opus 4          & 0.3475 & 0.0653 & 0.1717 & 0.0617 & 0.2212 & 0.0488 & 0.3330 & 0.0675 \\
Claude Opus 4.1        & \textbf{0.4529} & \textbf{0.1462} & \textbf{0.4709} & \textbf{0.1419} & \textbf{0.4454} & \textbf{0.1416} & \textbf{0.4932} & \textbf{0.2038} \\
\midrule
Qwen-3-0.6B            & 0.2936 & 0.0023 & 0.1769 & 0.0071 & 0.2114 & 0.0052 & 0.2716 & 0.0034 \\
Qwen-3-1.7B            & 0.3658 & 0.0278 & 0.2746 & 0.0256 & 0.3237 & 0.0321 & 0.3917 & 0.0345 \\
Qwen-3-4B              & 0.2166 & 0.0496 & 0.2217 & 0.0378 & 0.2400 & 0.0325 & 0.2569 & 0.0619 \\
Qwen-3-8B              & 0.3085 & 0.0342 & 0.3010 & 0.0268 & 0.3022 & 0.0354 & 0.3274 & 0.0357 \\
Qwen-3-14B             & 0.3247 & 0.0684 & 0.2213 & 0.0473 & 0.2594 & 0.0449 & 0.3120 & 0.0801 \\
Qwen-3-32B             & 0.3883 & 0.0767 & 0.2038 & 0.0585 & 0.2644 & 0.0576 & 0.3992 & 0.0658 \\
Qwen3-30B-A3B          & 0.3293 & 0.0503 & 0.3855 & 0.0742 & 0.3727 & 0.0437 & 0.4162 & 0.0691 \\
Qwen3-235B-A22B        & 0.3915 & 0.0819 & 0.4306 & 0.0415 & 0.4069 & 0.0762 & 0.4533 & 0.0794 \\
\midrule
Deepseek-R1            & -0.0318 & 0.0072 & -0.0455 & 0.0046 & -0.0493 & 0.0033 & -0.0108 & 0.0028 \\
V3.1          & 0.3966 & 0.0351 & 0.2048 & 0.0369 & 0.2714 & 0.0427 & 0.3559 & 0.0362 \\
V3.1-Terminus & 0.3241 & 0.0237 & 0.2835 & 0.0353 & 0.3091 & 0.0316 & 0.3228 & 0.0319 \\
V3.2-Exp     & 0.3029 & 0.0276 & 0.2651 & 0.0331 & 0.2726 & 0.0209 & 0.3219 & 0.0293 \\
\bottomrule
\end{tabular}
\vspace{-.5em}
\caption{Performance of mainstream models in mathematical and scientific reasoning. Each benchmark shows ARISE scores and corresponding Scaling Metrics (SM). For improved readability, SM values have been multiplied by 1000. The original unscaled values can be found in Appendix Table~\ref{tab:arise_scaling}.}
\label{tab:arise_scaling_processed}
\vspace{-1.5em}
\end{table*}

\paragraph{Dynamic Sampling Protocol.}
We implement an adaptive sampling strategy with a maximum sampling budget $m_{\max}$ and a convergence threshold $\tau$. For each configuration $(i,j)$, we iteratively collect samples while monitoring the combined coefficient of variation. Sampling continues until either convergence is achieved or the budget is exhausted:
\begin{equation}
\text{CV}_i^{(j)} < \tau \quad \text{or} \quad k = m_{\max}
\label{eq:stopping}
\end{equation}
Upon termination at iteration $k^*$, we compute the final statistics as the empirical means over all collected samples:
\begin{align}
a_i^{(j)} &= \frac{1}{k^*} \sum_{k=1}^{k^*} a_{i,k}^{(j)} \\
t_i^{(j)} &= \frac{1}{k^*} \sum_{k=1}^{k^*} t_{i,k}^{(j)}
\label{eq:final_stats}
\end{align}
This adaptive approach balances statistical reliability with computational efficiency. High-variance configurations receive additional sampling to reduce uncertainty, while stable configurations terminate early to conserve resources. We provide the algorithmic implementation in Appendix~\ref{app:algorithm}.

\paragraph{Budget Allocation.}
For comparative analysis with fixed total budget $B$ across $n$ samples and $J$ iterations, we allocate additional trials proportionally to observed variance:

\begin{equation}
m_i^{(j)} = m_{\min} + \left\lfloor \frac{(B - nJm_{\min}) \cdot \text{CV}_i^{(j)}}{\sum_{i',j'} \text{CV}_{i'}^{(j')}} \right\rfloor
\label{eq:budget_algo}
\end{equation}
This variance-proportional allocation distributes the remaining sampling budget according to the variability observed in the initial probing phase. Cases with greater fluctuations receive more sampling, ensuring a more robust and reliable assessment of scaling across diverse model behaviors.

\section{Experiments}
\label{sec:experiments}

We conduct comprehensive experiments to evaluate ARISE across diverse reasoning tasks and model architectures. Our evaluation encompasses both text-based and multimodal benchmarks, covering mathematical reasoning, scientific problem-solving, code generation, and agentic capabilities.

\begin{figure*}[thp]
    \centering
    \includegraphics[width=\textwidth]{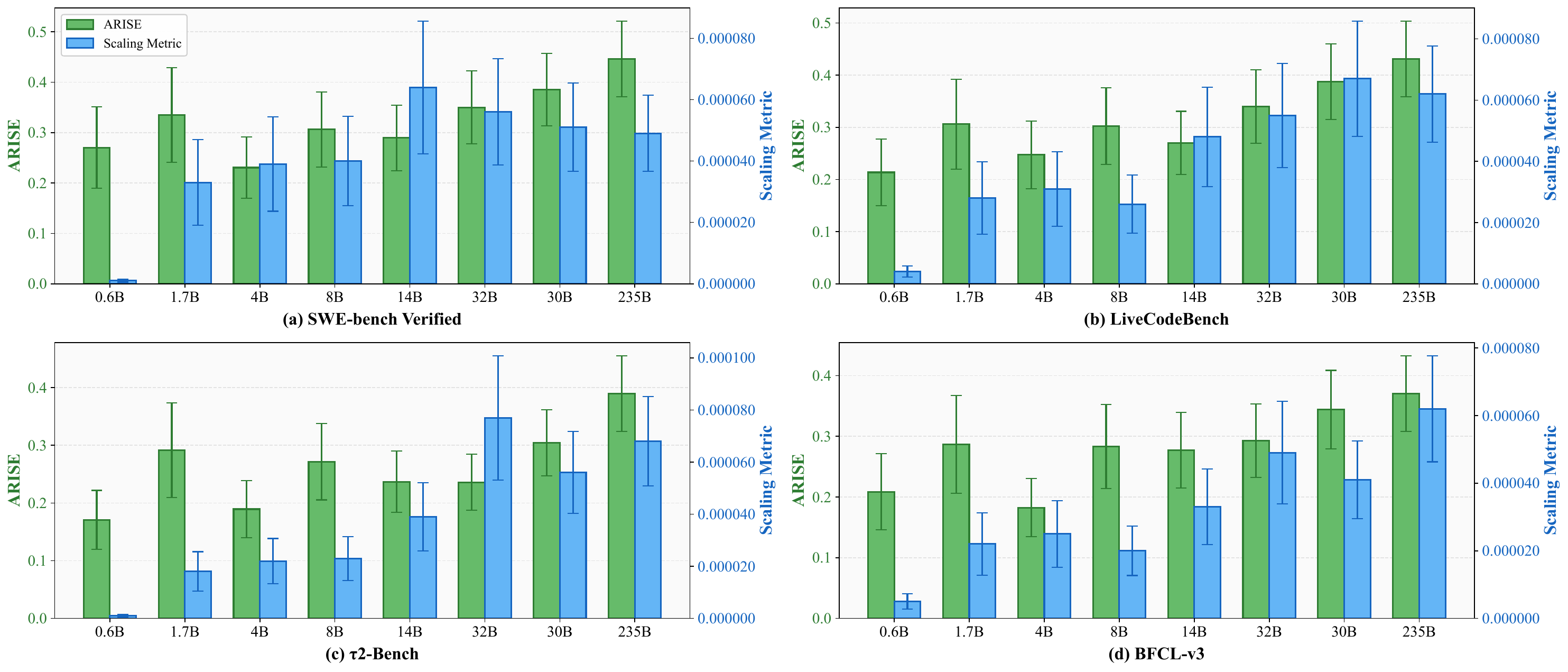}
    \caption{Comparison of ARISE and Scaling Metric across different Qwen3 models on code and agentic tasks. The x-axis shows model parameter counts where 0.6B, 1.7B, 4B, 8B, 14B, 32B correspond to Qwen-3 models, 30B corresponds to Qwen3-30B-A3B, and 235B corresponds to Qwen3-235B-A22B. ARISE values are shown on the left y-axis (green bars) while Scaling Metric values are shown on the right y-axis (blue bars). Error bars represent standard deviations across five independent runs. Complete results for all models are presented in Appendix Table~\ref{tab:arise_scaling_code_agentic}.}
    \label{fig:arise_scaling_comparison}
    \vspace{-1em}
\end{figure*}

\subsection{Experimental Settings}
\label{subsec:experiment_settings}

\paragraph{Evaluation Datasets.} To thoroughly assess the effectiveness of ARISE, we incorporate a comprehensive suite of text-based and multimodal benchmarks. Our evaluation spans four primary categories of reasoning tasks:

\begin{itemize}[itemsep=5pt,topsep=3pt,parsep=0pt]
\item \textbf{Mathematical Reasoning:} AIME~\citep{aime} and HMMT~\citep{balunovic2025srimatharena}
\item \textbf{Scientific Reasoning:} GPQA-diamond~\citep{rein2024gpqa} and MMLU-Pro~\citep{wang2024mmlu}
\item \textbf{Code Generation:} SWE-bench Verified~\citep{Jimenez2024swe} and LiveCodeBench~\citep{jain2025livecodebench}
\item \textbf{Agentic Tasks:} $\tau^2$-Bench~\citep{barres2025tau2} and BFCL-v3~\citep{patil2025bfcl}
\end{itemize}
Additionally, we evaluate multimodal reasoning capabilities on vision-language tasks, including MMMU~\citep{yue2024mmmu}, MathVista~\citep{lu2024mathvista}, and CharXiv-Reasoning~\citep{wang2024charxiv}. These benchmarks test the models' ability to integrate visual and textual information for complex reasoning. Detailed descriptions of each dataset, including sample counts and evaluation metrics, are provided in Appendix~\ref{app:dataset_details}.

\vspace{-.5em}
\paragraph{Implementation Details.}
We evaluate both proprietary and open-source reasoning models through their respective interfaces. For the main experiments, we configure the adaptive sampling parameters with $m_{\min} = 3$, $m_{\max} = 10$, and convergence threshold $\tau = 0.5$. To assess metric stability (Section~\ref{subsec:analysis}), we conduct five independent runs and report standard deviations. For the adaptive sampling analysis under fixed budget constraints (Section~\ref{subsec:adaptive_sampling}), we set the total budget $B = 5nJ$, where $n$ denotes the number of samples and $J$ represents the number of scaling iterations. Additional implementation details are documented in Appendix~\ref{app:experiment_details}.

\subsection{Main Results}
\label{subsec:main_results}

\begin{figure*}[thp]
    \centering
    \includegraphics[width=\textwidth]{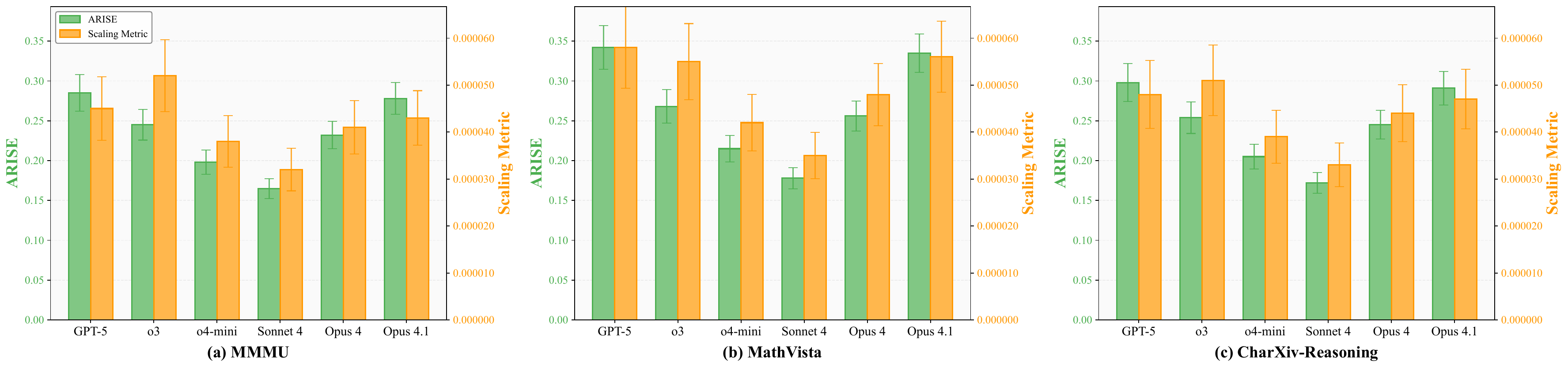}
    \caption{Comparison of ARISE and Scaling Metric across state-of-the-art reasoning models on multimodal reasoning tasks. The evaluation encompasses three challenging benchmarks: (a) MMMU, (b) MathVista, and (c) CharXiv-Reasoning. ARISE values are shown on the left y-axis (green bars) while Scaling Metric values are shown on the right y-axis (orange bars).}
    \label{fig:multimodal_arise_scaling_estimation}
    \vspace{-1em}
\end{figure*}

\paragraph{Mathematical and Scientific Reasoning.} 
Table~\ref{tab:arise_scaling_processed} presents our evaluation of models on mathematical and scientific reasoning benchmarks. Claude Opus 4.1 consistently achieves the highest performance on both ARISE and traditional Scaling Metric, with ARISE scores consistently exceeding 0.45 across most benchmarks and reaching 0.493 on MMLU-Pro. Among open-source models, Qwen3-235B-A22B demonstrates the strongest test-time scaling capabilities, achieving ARISE scores above 0.39 on all evaluated tasks, approaching commercial models. Notably, the ARISE metric reveals nuanced scaling behaviors that traditional metrics miss, particularly the negative scaling phenomena exhibited by certain models where increased computation degrades performance.

\vspace{-.5em}
\paragraph{Code Generation and Agentic Tasks.} 
Figure~\ref{fig:arise_scaling_comparison} illustrates the comparative analysis of ARISE and Scaling Metric on code generation and agentic tasks across the Qwen3 model family. We observe a consistent positive correlation between model parameter size and both metrics, validating that larger models generally exhibit superior test-time scaling capabilities. Interestingly, code generation tasks consistently achieve higher ARISE scores compared to agentic tasks across all model sizes, suggesting that programming problems benefit more substantially from test-time scaling than tool-use.

\vspace{-.5em}
\paragraph{Multimodal Reasoning.} 
Figure~\ref{fig:multimodal_arise_scaling_estimation} presents our analysis on multimodal reasoning tasks. GPT-5 and Claude Opus 4.1 achieve the highest performance across all three benchmarks, with GPT-5 demonstrating a marginal advantage on CharXiv-Reasoning, particularly for complex scientific chart interpretation. Compared to the Scaling Metric, ARISE provides significantly better discriminative power. The performance gap between weaker models (e.g., Sonnet 4) and stronger models (e.g., Opus 4.1) becomes more pronounced under ARISE evaluation, with differences exceeding 75.2\% versus 45.6\% in Scaling Metrics.

\subsection{Analysis}
\label{subsec:analysis}

\paragraph{Penalization of Performance Degradation.} 
Table~\ref{tab:arise_scaling_processed} reveals a critical distinction between ARISE and Scaling Metric in handling negative scaling behaviors. While both metrics generally provide concordant evaluations, ARISE uniquely captures performance degradation through negative scores. For instance, GPT-OSS-20B achieves -0.403 on AIME and DeepSeek-R1 scores -0.049 on GPQA Diamond, indicating that many initially correct samples become incorrect with increased computation, a phenomenon consistent with recent findings by \citet{zeng2025revisiting}. Scaling Metric remains insensitive to this crucial behavior, assigning only positive values regardless of degradation, failing to effectively penalize computational waste. We provide a detailed analysis of sample accuracy transitions before and after scaling in Appendix~\ref{app:analysis}.

\paragraph{Evaluation Stability.} 
The error bars in Figure~\ref{fig:arise_scaling_comparison} demonstrate ARISE's superior stability compared to traditional metrics. When model performances are obvious (e.g., 32B, 30B, and 235B variants), ARISE exhibits notably smaller variance. To quantify this stability independent of scale differences, we compute the coefficient of variation (CV) across five independent runs. ARISE achieves an average CV of 0.14, substantially lower than Scaling Metric's 0.28. This pattern extends to multimodal tasks (Figure~\ref{fig:multimodal_arise_scaling_estimation}), where ARISE's average CV of 0.08 outperforms Scaling Metric's 0.15. The enhanced stability stems from our adaptive sampling mechanism, which dynamically allocates computational budget based on observed variance patterns.

\paragraph{Cross-Dataset Consistency.} 
Table~\ref{tab:arise_scaling_processed} demonstrates ARISE's remarkable consistency across diverse evaluation domains. For instance, Claude Opus 4.1 maintains ARISE scores within a narrow range (average deviation < 0.015) across mathematical and scientific reasoning tasks, despite their distinct problem characteristics. To quantify cross-dataset consistency, we compute the coefficient of variation across different benchmarks for each model. ARISE achieves an average inter-dataset CV of 0.194, compared to 0.249 for traditional Scaling Metric. This consistency indicates that ARISE captures fundamental scaling properties that transcend specific task domains, providing a more robust assessment of model capabilities.

\paragraph{Model Evolution Tracking.} 
Our experiments reveal clear progression patterns across model generations, validating ARISE's sensitivity to architectural improvements. The OpenAI o-series progresses from o1 (average ARISE $\approx$ 0.134) to o3 ($\approx$ 0.239), representing a 78\% improvement. More dramatically, the Anthropic Claude series shows substantial gains from Claude Sonnet 4 ($\approx$ 0.079) through Claude Opus 4 ($\approx$ 0.268) to Claude Opus 4.1 ($\approx$ 0.465), achieving nearly 6× improvement. These patterns, consistently observed in both text-based (Table~\ref{tab:arise_scaling_processed}) and multimodal evaluations (Figure~\ref{fig:multimodal_arise_scaling_estimation}), 
demonstrate that recent model development has successfully enhanced test-time scaling capabilities and validate ARISE's effectiveness in tracking these advancements.

\subsection{Adaptive Sampling Effectiveness}
\label{subsec:adaptive_sampling}

\begin{figure*}[thp]
    \centering
    \includegraphics[width=\textwidth]{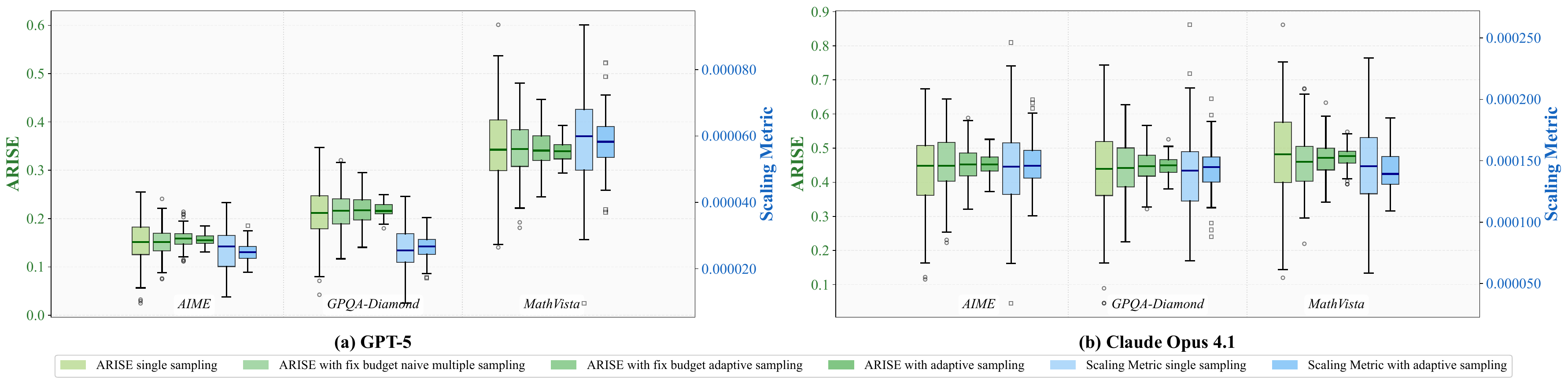}
    \vspace{-1.5em}
    \caption{Stability analysis of adaptive sampling on (a) GPT-5 and (b) Claude Opus 4.1. Box plots compare variance across six sampling strategies, demonstrating that adaptive sampling achieves superior stability.}
    \label{fig:adaptive_sampling_stability_analysis}
\end{figure*}

\begin{figure*}[thp]
    \centering
    \includegraphics[width=\textwidth]{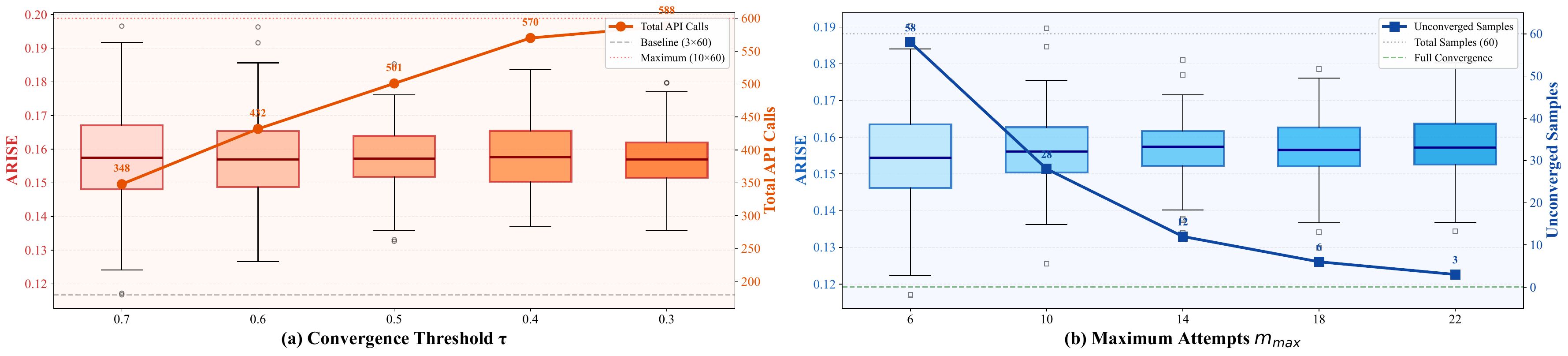}
    \vspace{-1.5em}
    \caption{Hyperparameter analysis of adaptive sampling using GPT-5 on AIME. (a) Threshold $\tau$: box plots show ARISE (left axis) while the line plot indicates total API calls (right axis). (b) Maximum attempts $m_{max}$: box plots display ARISE (left axis) while the line plot tracks unconverged samples failing to meet threshold $\tau$ (right axis).}
    \label{fig:adaptive_sampling_hyperparameter_analysis}
    \vspace{-1em}
\end{figure*}

\paragraph{Impact of Adaptive Sampling on Evaluation Stability.} 
Figure~\ref{fig:adaptive_sampling_stability_analysis} presents a comprehensive analysis of how adaptive sampling affects evaluation stability across different datasets and models. The results demonstrate that adaptive sampling substantially enhances stability for both metrics across all evaluated benchmarks. Notably, while ARISE exhibits comparable or even higher variance than Scaling Metric under single sampling, particularly on GPQA-Diamond, the introduction of adaptive sampling reverses this relationship. With full adaptive sampling enabled, ARISE achieves remarkable stability improvements with variance reduction of 76.1\%, significantly outperforming Scaling Metric's 54.2\% reduction. This superior improvement stems from ARISE's sample-level awareness, which enables more targeted allocation of computational resources to high-variance instances.

\vspace{-.5em}
\paragraph{Fixed Budget Analysis.} 
To evaluate the efficiency of adaptive sampling under resource constraints, we conduct experiments with fixed computational budgets, comparing adaptive sampling against naive multiple sampling that uniformly distributes resources across all instances. As illustrated in Figure~\ref{fig:adaptive_sampling_stability_analysis}, adaptive sampling consistently outperforms naive approaches at all configurations. Under fixed budget constraints, naive multiple sampling achieves a modest 31.4\% variance reduction compared to single sampling, while adaptive sampling delivers a substantial 57.5\% reduction, nearly doubling the effectiveness despite identical costs. This efficiency gain is particularly pronounced for challenging instances where performance exhibits high variability. The adaptive mechanism identifies these unstable cases and allocates additional samples accordingly, whereas naive sampling wastes resources on already-stable predictions.

\subsection{Hyperparameter Analysis}
\label{subsec:hyperparameter_analysis}

\paragraph{Convergence Threshold.}
Figure~\ref{fig:adaptive_sampling_hyperparameter_analysis}(a) examines the trade-off between evaluation stability and computational cost for convergence threshold $\tau$. As $\tau$ decreases, variance monotonically decreases while sampling counts increase substantially. However, diminishing returns become evident beyond $\tau = 0.5$, indicating stricter thresholds provide minimal stability gains at significantly higher computational cost. This establishes $\tau = 0.5$ as the optimal operating point, balancing effective variance reduction with reasonable resource consumption.

\vspace{-.5em}
\paragraph{Maximum Sampling Attempts.}
Figure~\ref{fig:adaptive_sampling_hyperparameter_analysis}(b) analyzes the effect of maximum attempts $m_{max}$ on convergence behavior. While increasing $m_{max}$ progressively reduces variance, the improvement rate declines after $m_{max} = 10$, as evidenced by the rapidly decreasing number of unconverged samples. Extended sampling beyond this threshold yields marginal benefits while potentially doubling computation for difficult instances. These findings confirm that $m_{max} = 10$ optimally balances stability and efficiency.

% To evaluate the effectiveness of our adaptive sampling strategy, we conduct controlled experiments comparing fixed versus adaptive allocation under equivalent computational budgets with a total budget of $B = 5nJ$ trials.

% adaptive sampling reduces the average standard error by 38\% compared to uniform allocation while maintaining identical expected computational costs. High-variance configurations automatically receive up to $m_{\max} = 10$ trials, while stable configurations terminate early at $m_{\min} = 3$ trials, optimizing resource utilization. This variance-aware allocation proves particularly beneficial for unstable models like GPT-OSS variants, where accuracy fluctuations can exceed 40\% across trials. Detailed ablation studies and budget sensitivity analyses are provided in Appendix~\ref{app:adaptive_ablation}.
\section{Conclusion}
\label{sec:conclusion}

We propose ARISE, a novel metric for evaluating test-time scaling capabilities that addresses fundamental limitations of existing approaches through sample-level awareness and adaptive sampling mechanisms. By tracking individual sample trajectories across scaling iterations and appropriately penalizing performance degradation, ARISE provides principled assessment of scaling effectiveness while the dynamic sampling strategy ensures statistically reliable measurements under variance. Experiments across diverse reasoning tasks demonstrate that ARISE delivers consistent, discriminative and stable evaluations, establishing it as an principled framework for comparing test-time scaling capabilities across large reasoning models.
\section*{Limitations and Broader Impacts}
\label{sec:limitations}

\paragraph{Generalization to Diverse Scenarios.} While we conducted extensive evaluation across mathematical, scientific, coding, agentic, and multimodal tasks, the test-time scaling capabilities of reasoning models in more scenarios remain unexplored. For instance, more diverse agentic environments beyond the mock, airline, retail, and telecom scenarios in $\tau^2$-bench~\citep{barres2025tau2} is necessary to explore. Although expanding evaluation scope incurs additional computational costs, ARISE demonstrates strong cross-task consistency, suggesting that incorporating more evaluation domains will progressively approximate models' true test-time scaling capabilities.

\paragraph{Cross-Linguistic Evaluation.} Our evaluation is currently limited to English-language datasets due to resource constraints. Cross-linguistic analysis is essential as AI systems should equitably serve users across different languages rather than privileging English speakers. Previous research has identified significant performance disparities in large language models when solving identical problems across different languages~\citep{shafayat2024multifact,xuan2025mmluproxmultilingualbenchmarkadvanced}. Future work should evaluate test-time scaling behaviors of large reasoning models across diverse languages when appropriate evaluation datasets become available, ensuring that scaling benefits are universally accessible.

\section*{Ethics Statement}
\label{sec:ethics_statement}

\paragraph{Data Usage Compliance.} Throughout our experiments, we strictly adhere to all applicable data usage regulations and licensing requirements. Table~\ref{tab:dataset_statistics} provides comprehensive details for each dataset, including domain, answer type, sample size, and corresponding license information. All datasets utilized in this work are in English, and we have carefully verified that our usage complies with the specific licensing terms of each dataset.

\paragraph{Model Usage Compliance.} We maintain strict adherence to all model usage regulations and terms of service. Table~\ref{tab:model_statistics} details the specifications of each model, including parameter counts, release dates, and releasing organizations. For proprietary models, we strictly comply with the terms of service established by their respective organizations and exclusively utilize official APIs. For open-source models, we ensure full compliance with their corresponding licenses and usage guidelines.

\paragraph{Safety and Content Moderation.} We implement rigorous safeguards to ensure that all model outputs remain safe and appropriate, containing no personal information or offensive content. The datasets employed in our evaluation have undergone thorough community validation and review, and we have identified no instances of unsafe or inappropriate content throughout our experimental process.

\paragraph{Use of AI Assistants.} During the development of this work, we utilized Cursor for code composition and development. We ensure that all usage of AI tools strictly complies with submission guidelines and ethical standards established by the ACL community.

\bibliography{custom}

\appendix
\section{Boundedness Analysis}
\label{app:boundedness}

We provide formal boundedness proofs for both the traditional Scaling Metric and our proposed ARISE metric, demonstrating the fundamental differences in their mathematical properties.

\subsection{Scaling Metric Bounds}

The Scaling Metric~\citep{muennighoff2025s1simpletesttimescaling} is defined as:
\begin{equation}
\text{Scaling} = \frac{1}{\binom{|\mathcal{P}|}{2}} \sum_{\substack{p_1, p_2 \in \mathcal{P} \\ \mathcal{T}(p_2) > \mathcal{T}(p_1)}} \frac{\mathcal{A}(p_2) - \mathcal{A}(p_1)}{\mathcal{T}(p_2) - \mathcal{T}(p_1)}
\label{eq:scaling_app}
\end{equation}

\begin{theorem}[Scaling Metric Bounds]
\label{thm:scaling_bounds}
The Scaling Metric is bounded: $\text{Scaling} \in \left[\frac{-1}{\delta_{\min}}, \frac{1}{\delta_{\min}}\right]$ where $\delta_{\min} = \min_{p_1,p_2} (\mathcal{T}(p_2) - \mathcal{T}(p_1))$.
\end{theorem}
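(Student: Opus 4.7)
The plan is to bound each summand in Equation~\ref{eq:scaling_app} individually and then use the fact that the aggregate is a normalized average over $\binom{|\mathcal{P}|}{2}$ such terms. Concretely, I would establish a uniform bound on $\left|\frac{\mathcal{A}(p_2) - \mathcal{A}(p_1)}{\mathcal{T}(p_2) - \mathcal{T}(p_1)}\right|$ for every admissible pair, and then invoke the triangle inequality together with the fact that an average of numbers lying in $[-M, M]$ itself lies in $[-M, M]$.

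First I would note that accuracy is a probability, so $\mathcal{A}(p) \in [0,1]$ for every $p \in \mathcal{P}$. This immediately yields $|\mathcal{A}(p_2) - \mathcal{A}(p_1)| \leq 1$, with the extremes attained when one accuracy is $0$ and the other is $1$. Next I would handle the denominator: by the definition $\delta_{\min} = \min_{p_1,p_2:\, \mathcal{T}(p_2) > \mathcal{T}(p_1)} (\mathcal{T}(p_2) - \mathcal{T}(p_1))$, every admissible pair in the sum satisfies $\mathcal{T}(p_2) - \mathcal{T}(p_1) \geq \delta_{\min} > 0$ (the positivity coming from the strict inequality $\mathcal{T}(p_2) > \mathcal{T}(p_1)$ that restricts the index set).

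Combining these two bounds gives $\left|\frac{\mathcal{A}(p_2) - \mathcal{A}(p_1)}{\mathcal{T}(p_2) - \mathcal{T}(p_1)}\right| \leq \frac{1}{\delta_{\min}}$ for every term. Applying the triangle inequality to the sum and dividing by $\binom{|\mathcal{P}|}{2}$ yields $|\text{Scaling}| \leq \frac{1}{\delta_{\min}}$, which is precisely the claimed two-sided bound. To close the argument it suffices to exhibit sequences of points that saturate each side: I would sketch a two-point configuration with $\mathcal{T}(p_1), \mathcal{T}(p_2)$ at distance exactly $\delta_{\min}$ and accuracies $(0,1)$ and $(1,0)$ respectively, showing that both extremes are attainable and the bound is tight.

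I expect the main obstacle to be notational rather than mathematical: making sure the index set in the sum is consistent with the definition of $\delta_{\min}$ (both ranging only over pairs with $\mathcal{T}(p_2) > \mathcal{T}(p_1)$), and handling the trivial edge case $|\mathcal{P}| \leq 1$ where $\binom{|\mathcal{P}|}{2} = 0$ and the metric is undefined, which I would dispose of with a brief remark rather than incorporate into the bound.
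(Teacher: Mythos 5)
Your proposal is correct and takes essentially the same route as the paper: bound the numerator of each term by $1$ via $\mathcal{A}\in[0,1]$, bound the denominator below by $\delta_{\min}>0$, and observe that an average of terms lying in $[-1/\delta_{\min},1/\delta_{\min}]$ stays in that interval. Your added remarks on tightness via a two-point configuration and the degenerate case $|\mathcal{P}|\le 1$ are small refinements the paper omits, but the core argument is identical.
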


\begin{proof}
For any pair $(p_1, p_2)$ with $\mathcal{T}(p_2) > \mathcal{T}(p_1)$:

Since $\mathcal{A} \in [0,1]$, the numerator satisfies:
\begin{equation}
-1 \leq \mathcal{A}(p_2) - \mathcal{A}(p_1) \leq 1
\label{eq:num_bound}
\end{equation}

For the denominator, by definition:
\begin{equation}
\mathcal{T}(p_2) - \mathcal{T}(p_1) \geq \delta_{\min} > 0
\label{eq:denom_bound}
\end{equation}

Combining Equations~\ref{eq:num_bound} and \ref{eq:denom_bound}:
\begin{equation}
\frac{-1}{\delta_{\min}} \leq \frac{\mathcal{A}(p_2) - \mathcal{A}(p_1)}{\mathcal{T}(p_2) - \mathcal{T}(p_1)} \leq \frac{1}{\delta_{\min}}
\label{eq:pair_bound}
\end{equation}

Since the Scaling Metric is an average over all pairs, it inherits these bounds. In practice, with normalized token differences where $\delta_{\min} \to 1^+$, we obtain $\text{Scaling Metric} \in (-1, 1)$.
\end{proof}

\subsection{ARISE Bounds}

For ARISE, we analyze bounds considering all possible accuracy transition patterns.

\begin{theorem}[Sample-Level ARISE Bounds]
\label{thm:arise_sample_bounds}
For a single sample $i$ with $m$ scaling iterations, $\text{ARISE}_i \in (-\infty, 1)$.
\end{theorem}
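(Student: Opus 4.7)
The plan is to analyze the structure of $\text{ARISE}_i = \sum_{j=1}^{m} \Delta a_i^{(j)} \cdot W_i^{(j)}$ term by term, then combine the per-term bounds with a combinatorial constraint coming from the fact that $a_i^{(j)} \in \{0,1\}$. First, I would classify each summand according to the three cases in Equation~\ref{eq:cases}: a zero contribution when $\Delta a_i^{(j)} = 0$; a positive contribution $c_j^+ = t_i^{(j-1)}/t_i^{(j)} \in (0,1)$ when the sample flips from incorrect to correct; and a negative contribution $c_j^- = -t_i^{(j)}/t_i^{(j-1)} \in (-\infty, -1)$ when it flips from correct to incorrect. The strict inequality $c_j^+ < 1$ and $c_j^- < -1$ come directly from the construction $t_i^{(j)} > t_i^{(j-1)}$.

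Next, I would exploit the fact that up- and down-transitions must alternate since accuracies are binary. Letting $p$ denote the number of $0 \to 1$ transitions and $q$ the number of $1 \to 0$ transitions, a standard telescoping argument gives $p - q = a_i^{(m)} - a_i^{(0)} \in \{-1, 0, 1\}$, hence $p - q \leq 1$, with equality precisely when $a_i^{(0)} = 0$ and $a_i^{(m)} = 1$. This combinatorial bound is what prevents repeated recoveries from accumulating unbounded positive mass.

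For the upper bound, I would then chain the strict per-term bounds with the structural inequality:
\begin{equation*}
\text{ARISE}_i \;=\; \sum_{j : \Delta a_i^{(j)}=+1} c_j^+ \;+\; \sum_{j : \Delta a_i^{(j)}=-1} c_j^- \;<\; p \cdot 1 + q \cdot (-1) \;=\; p - q \;\leq\; 1,
\end{equation*}
which yields $\text{ARISE}_i < 1$ strictly. For the lower bound, I would simply exhibit a degenerate trajectory: take $m=1$ with $a_i^{(0)}=1$, $a_i^{(1)}=0$, and let $t_i^{(1)}/t_i^{(0)} \to \infty$, giving $\text{ARISE}_i = -t_i^{(1)}/t_i^{(0)} \to -\infty$, which shows no finite lower bound exists. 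Combining the two establishes $\text{ARISE}_i \in (-\infty, 1)$.

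The main obstacle I anticipate is being careful with the strictness of the upper bound. Naively one might write $c_j^+ \leq 1$ and $c_j^- \leq -1$, which only gives $\text{ARISE}_i \leq p - q \leq 1$; recovering the open interval requires observing that at least one positive term must be strictly less than $1$ (or, more uniformly, that every nonzero term is strict since $t_i^{(j)} \neq t_i^{(j-1)}$ under the construction $t_i^{(j)} > t_i^{(j-1)}$). Handling the edge case where no transitions occur at all (so $\text{ARISE}_i = 0 < 1$) is trivial but should be stated for completeness.
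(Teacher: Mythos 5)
Your proof is correct and follows the same decomposition the paper uses (splitting $\text{ARISE}_i$ into improvement terms lying in $(0,1)$ and degradation terms lying in $(-\infty,-1)$), but it is noticeably more complete than the paper's. The paper's upper-bound argument simply asserts that ``the maximum occurs when there are only improvements and no degradations'' and then evaluates a \emph{single} $0\to 1$ transition, without ever explaining why a trajectory with several $0\to 1$ flips (which, for binary accuracies, must be interleaved with $1\to 0$ flips) cannot push the sum above $1$. You supply exactly the missing lemma: the telescoping identity $p-q=a_i^{(m)}-a_i^{(0)}\in\{-1,0,1\}$, so the \emph{net} number of up-transitions is at most one, and combined with the strict per-term bounds $c_j^{+}<1$, $c_j^{-}<-1$ this gives $\text{ARISE}_i<p-q\le 1$ whenever at least one transition occurs (with the transition-free case handled separately as $0<1$). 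Your lower-bound construction matches the paper's exactly. In short, you did not take a different route so much as close a gap the paper's proof glosses over; your combinatorial argument is what actually rules out unbounded positive accumulation from repeated recoveries.
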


\begin{proof}
From Equation~\ref{eq:cases}, ARISE$_i$ is the sum of contributions from all transitions. Let $\mathcal{I} = \{j: a_i^{(j)} = 1, a_i^{(j-1)} = 0\}$ denote improvement transitions and $\mathcal{D} = \{j: a_i^{(j)} = 0, a_i^{(j-1)} = 1\}$ denote degradation transitions.

\begin{equation}
\text{ARISE}_i = \sum_{j \in \mathcal{I}} \frac{t_i^{(j-1)}}{t_i^{(j)}} - \sum_{j \in \mathcal{D}} \frac{t_i^{(j)}}{t_i^{(j-1)}}
\label{eq:arise_decomp}
\end{equation}

\textbf{Upper bound:} The maximum occurs when there are only improvements and no degradations. The optimal accuracy sequence with a single transition at position $j^*$:

\begin{equation}
\text{ARISE}_{i,\max} = \frac{t_i^{(j^*-1)}}{t_i^{(j^*)}} < 1
\label{eq:single_improvement}
\end{equation}

As $t_i^{(j^*)} \to t_i^{(j^*-1)}$, ARISE$_{i,\max} \to 1^-$. Thus, the supremum is 1.

\textbf{Lower bound:} The minimum occurs with only degradations and no improvements. The worst accuracy sequence with a single transition at position $j^*$:

\begin{equation}
\text{ARISE}_{i,\min} = -\frac{t_i^{(j^*)}}{t_i^{(j^*-1)}} < -1
\label{eq:single_degradation}
\end{equation}

Since $t_i^{(j^*)}$ can be arbitrarily larger than $t_i^{(j^*-1)}$, ARISE$_{i,\min} \to -\infty$, yielding intermediate values within $(-\infty, 1)$.
\end{proof}

\begin{theorem}[ARISE Bounds]
\label{thm:arise_global_bounds}
The aggregate ARISE metric satisfies $\text{ARISE} \in (-\infty, 1)$.
\end{theorem}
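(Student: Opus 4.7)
The plan is to derive the global bounds directly from the sample-level bounds established in Theorem~\ref{thm:arise_sample_bounds}, then argue tightness by constructing explicit sequences of configurations that approach each bound. Since $\text{ARISE} = \tfrac{1}{n}\sum_{i=1}^{n} \text{ARISE}_i$ is an arithmetic mean, boundedness properties should transfer naturally through linearity, so the proof should be short.

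First, I would establish the upper bound. By Theorem~\ref{thm:arise_sample_bounds}, each $\text{ARISE}_i < 1$, so averaging gives $\text{ARISE} = \tfrac{1}{n}\sum_{i=1}^{n} \text{ARISE}_i < 1$. To show this upper bound is the supremum and cannot be improved, I would construct, for any $\varepsilon > 0$, a configuration of $n$ samples each with a single improvement transition $(0 \to 1)$ and with token ratio $t_i^{(j^*-1)}/t_i^{(j^*)} > 1 - \varepsilon$ (achievable by the same limiting argument used in the proof of Theorem~\ref{thm:arise_sample_bounds}). This yields $\text{ARISE} > 1 - \varepsilon$, establishing that $\sup \text{ARISE} = 1$ while the value $1$ itself is never attained.

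Next, for the lower bound, I would exhibit an unbounded negative sequence. Fix any configuration for samples $2, \ldots, n$ whose contributions are bounded below by some finite constant $M$ (e.g., all zero-transition samples, contributing $0$). For sample $1$, choose a single degradation transition $(1 \to 0)$ with $t_1^{(j^*)}/t_1^{(j^*-1)} = R$ for arbitrarily large $R > 1$. Then $\text{ARISE}_1 = -R$ and the aggregate satisfies $\text{ARISE} \le \tfrac{-R + (n-1)M}{n} \to -\infty$ as $R \to \infty$. Combined with the trivial observation that every $\text{ARISE}_i$ is finite for any fixed configuration (so the aggregate is finite), this gives $\text{ARISE} \in (-\infty, 1)$.

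The main obstacle, though modest, is just being careful that the limiting constructions respect the structural constraints of the setup, namely that tokens are strictly increasing across iterations and that the $a_i^{(j)}$ take values in $\{0,1\}$. Both are easy to arrange by picking token sequences of the form $t^{(j-1)} = T$, $t^{(j)} = T(1+\delta)$ for the upper-bound construction (with $\delta \to 0^+$) and $t^{(j-1)} = T$, $t^{(j)} = RT$ for the lower-bound construction (with $R \to \infty$). No combinatorial or averaging subtlety arises because the bound propagates termwise under the uniform mean.
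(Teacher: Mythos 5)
Your proposal is correct and follows essentially the same route as the paper: both propagate the sample-level bounds from Theorem~\ref{thm:arise_sample_bounds} through the arithmetic mean, establish the upper bound via the limit $t_i^{(j^*)} \to t_i^{(j^*-1)}$ in a single improvement transition, and establish unboundedness below via a degradation transition with token ratio diverging to infinity. The only cosmetic difference is that you drive a single sample's ratio $R \to \infty$ while holding the others fixed, whereas the paper lets all samples degrade simultaneously; both constructions deliver the same conclusion and neither gains generality over the other.
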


\begin{proof}
From Equation~\ref{eq:arise_total}, ARISE is the arithmetic mean of sample scores:

\begin{equation}
\text{ARISE} = \frac{1}{n} \sum_{i=1}^{n} \text{ARISE}_i
\label{eq:global_arise}
\end{equation}

By Theorem~\ref{thm:arise_sample_bounds}, each $\text{ARISE}_i \in (-\infty, 1)$.

\textbf{Upper bound:} When all samples achieve the optimal pattern :
\begin{equation}
\text{ARISE}_{\max} = \lim_{t_i^{(j^*)} \to t_i^{(j^*-1)}} \frac{1}{n} \sum_{i=1}^n \frac{t_i^{(j^*-1)}}{t_i^{(j^*)}} < 1
\label{eq:global_max}
\end{equation}

\textbf{Lower bound:} When all samples exhibit the worst pattern :
\begin{equation}
\text{ARISE}_{\min} = \lim_{r \to \infty} \frac{1}{n} \sum_{i=1}^n (-r) = -\infty
\label{eq:global_min}
\end{equation}
where $r = \frac{t_i^{(j^*)}}{t_i^{(j^*-1)}}$ is the token ratio. Thus, the range of the ARISE is $(-\infty, 1)$.
\end{proof}

\section{Algorithm}
\label{app:algorithm}

We present the complete algorithmic procedure for computing ARISE with adaptive sampling. Algorithm~\ref{alg:arise} outlines the main computation flow, while Algorithm~\ref{alg:adaptive} details the dynamic sampling strategy.

\begin{algorithm}[t]
\caption{ARISE Computation with Adaptive Sampling}
\label{alg:arise}
\begin{algorithmic}[1]
\Require Dataset $\mathcal{D} = \{x_1, \ldots, x_n\}$, model $\mathcal{M}$, scaling iterations $J$, budget $B$
\Ensure ARISE score
\State Initialize $\text{ARISE} \gets 0$
\For{$i = 1$ to $n$} \Comment{For each sample}
    \State $\text{ARISE}_i \gets 0$
    \For{$j = 1$ to $J$} \Comment{Scaling iterations}
        \State // Adaptive sampling strategy
        \State $k^*, a_i^{(j)}, t_i^{(j)} \gets$ \Call{AdaptiveSample}{$\mathcal{M}, x_i, j$}
        \State // Compute contribution
        \State $\Delta a \gets a_i^{(j)} - a_i^{(j-1)}$
        \If{$\Delta a \neq 0$}
            \State $W \gets \left(\frac{t_i^{(j-1)}}{t_i^{(j)}}\right)^{\text{sign}(\Delta a)}$
            \State $\text{ARISE}_i \gets \text{ARISE}_i + \Delta a \cdot W$
        \EndIf
    \EndFor
    \State $\text{ARISE} \gets \text{ARISE} + \text{ARISE}_i$
\EndFor
\State \Return $\text{ARISE} / n$
\end{algorithmic}
\end{algorithm}

\begin{algorithm}[t]
\caption{Adaptive Sampling Strategy}
\label{alg:adaptive}
\begin{algorithmic}[1]
\Require Model $\mathcal{M}$, sample $x_i$, iteration $j$, threshold $\tau$, max trials $m_{\max}$
\Ensure Sample count $k^*$, mean accuracy $a_i^{(j)}$, mean tokens $t_i^{(j)}$
\State // Initial probing phase
\State $k \gets m_{\min}$ \Comment{Minimum trials}
\For{$\ell = 1$ to $m_{\min}$}
    \State $a_{i,\ell}^{(j)} \gets$ \Call{Evaluate}{$\mathcal{M}, x_i, t^{(j)}$}
    \State $t_{i,\ell}^{(j)} \gets$ \Call{CountTokens}{$\mathcal{M}, x_i$}
\EndFor
\State // Compute initial statistics
\State $\mu_a \gets \frac{1}{k} \sum_{\ell=1}^{k} a_{i,\ell}^{(j)}$
\State $\sigma_a \gets \sqrt{\frac{1}{k} \sum_{\ell=1}^{k} (a_{i,\ell}^{(j)} - \mu_a)^2}$
\State $\mu_t \gets \frac{1}{k} \sum_{\ell=1}^{k} t_{i,\ell}^{(j)}$
\State $\sigma_t \gets \sqrt{\frac{1}{k} \sum_{\ell=1}^{k} (t_{i,\ell}^{(j)} - \mu_t)^2}$
\State // Compute coefficient of variation
\State $\text{CV} \gets \frac{\sigma_a}{\mu_a + \epsilon} + \frac{\sigma_t}{\mu_t + \epsilon}$
\While{$\text{CV} \ge \tau$ \textbf{and} $k \le m_{\max}$}
    \State $k \gets k + 1$
    \State $a_{i,k}^{(j)} \gets$ \Call{Evaluate}{$\mathcal{M}, x_i, t^{(j)}$}
    \State $t_{i,k}^{(j)} \gets$ \Call{CountTokens}{$\mathcal{M}, x_i$}
    \State Update $\mu_a, \sigma_a, \mu_t, \sigma_t, \text{CV}$
\EndWhile
\State $k^* \gets k$ \Comment{Final sample count}
\State $a_i^{(j)} \gets \frac{1}{k^*} \sum_{\ell=1}^{k^*} a_{i,\ell}^{(j)}$
\State $t_i^{(j)} \gets \frac{1}{k^*} \sum_{\ell=1}^{k^*} t_{i,\ell}^{(j)}$
\State \Return $k^*, a_i^{(j)}, t_i^{(j)}$
\end{algorithmic}
\end{algorithm}

\subsection{Algorithmic Analysis}

\paragraph{Computational Complexity.}
The time complexity of ARISE is $\mathcal{O}(n \cdot J \cdot \bar{k} \cdot C_{\text{eval}})$, where $n$ is the dataset size, $J$ is the number of scaling iterations, $\bar{k}$ is the average sample count per configuration, and $C_{\text{eval}}$ is the cost of a single model evaluation. The adaptive termination in Algorithm~\ref{alg:adaptive} ensures $\bar{k} \ll m_{\max}$ for stable configurations, significantly reducing computation compared to fixed sampling.

\paragraph{Early Termination.}
Algorithm~\ref{alg:adaptive} implements dynamic early stopping based on the coefficient of variation (CV) from accuracy and token statistics (line 14). When $\text{CV} < \tau$, sampling terminates due to sufficient statistical stability. Low-variance configurations converge with $k \approx m_{\min}$ samples, while high-variance cases automatically expand to $k \to m_{\max}$.

\paragraph{Adaptive Resource Allocation.}
The sample count $k^*$ varies per configuration based on observed variance, enabling natural budget redistribution. Stable configurations converge with minimal samples, while stochastic configurations adaptively require more trials, concentrating resources where uncertainty is highest.
\section{Experiment Details}
\label{app:experiment_details}

\subsection{Inference Configuration}

For open-source models, we deploy them using vLLM~\citep{kwon2023efficient} for efficient inference. Following official recommendations, we configure the generation hyperparameters as follows: Temperature = 0.6, Top-P = 0.95, Top-K = 20, and Min-P = 0. These settings balance between generation diversity and output quality, enabling controlled yet flexible reasoning processes during test-time scaling evaluation. 

\subsection{Test-Time Scaling Configuration}

For models supporting variable test-time computation, we evaluate across multiple scaling levels:
\begin{itemize}
    \item \textbf{Effort-based scaling} (OpenAI o-series, gpt-oss models): We test \texttt{low}, \texttt{medium}, and \texttt{high} reasoning effort levels, allowing models to allocate increasing computational resources to complex problems.
    \item \textbf{Mode-based scaling} (Claude 4 series, Qwen3 series and DeepSeek models): We compare \texttt{think} mode (with explicit reasoning chains) against \texttt{no-think} mode (direct response generation). For DeepSeek models, we utilize \texttt{reasoner} (\texttt{think}) and standard \texttt{chat} (\texttt{no-think}) modes for comparative analysis, with explicit prompts employed to control reasoning length for DeepSeek-R1.
\end{itemize}
All experiments are conducted with a maximum token limit of 32,768 for reasoning traces and 2,048 for queries, ensuring sufficient space for complex multi-step reasoning while maintaining computational feasibility. For consistency, we perform five independent runs for each configuration and report averaged results with standard deviations.

\section{Dataset Details}
\label{app:dataset_details}
% Updated Table 1: Dataset Statistics
\begin{table*}[t]
\centering

\footnotesize
\resizebox{\textwidth}{!}{%
\begin{tabular}{l|l|c|c|c}
\toprule
\textsc{Dataset} & \textsc{Domain} & \textsc{Answer Format} & \textsc{\# Samples} & \textsc{License} \\
\midrule
\multicolumn{5}{c}{\textit{Mathematical Reasoning}} \\
\midrule
\href{https://huggingface.co/datasets/MathArena/aime_2025}{AIME}~\citep{aime} & Competition Math & Integer (0-999) & 60 & CC BY-NC-SA 4.0 \\
\href{https://github.com/matharena/hmmt}{HMMT}~\citep{balunovic2025srimatharena} & Competition Math & Free-form & 60 & CC BY-NC-SA 4.0 \\
\midrule
\multicolumn{5}{c}{\textit{Scientific Reasoning}} \\
\midrule
\href{https://github.com/idavidrein/gpqa}{GPQA-Diamond}~\citep{rein2024gpqa} & Graduate Science & Multi-Choice (4) & 198 & CC BY 4.0 \\
\href{https://huggingface.co/datasets/TIGER-Lab/MMLU-Pro}{MMLU-Pro}~\citep{wang2024mmlu} & Professional Knowledge & Multi-Choice (10) & 12,032 & MIT \\
\midrule
\multicolumn{5}{c}{\textit{Code Generation}} \\
\midrule
\href{https://huggingface.co/datasets/princeton-nlp/SWE-bench_Verified}{SWE-bench Verified}~\citep{Jimenez2024swe} & Software Engineering & Code Patch & 500 & CC0 1.0 \\
\href{https://huggingface.co/datasets/livecodebench/code_generation}{LiveCodeBench}~\citep{jain2025livecodebench} & Code Generation & Full Code & 121 & CC \\
\midrule
\multicolumn{5}{c}{\textit{Agentic Tasks}} \\
\midrule
\href{https://github.com/sierra-research/tau2-bench}{$\tau^2$-Bench}~\citep{barres2025tau2} & Conversational Agents & Tool Calls & 279 & MIT \\
\href{https://gorilla.cs.berkeley.edu/blogs/13_bfcl_v3_multi_turn.html}{BFCL-v3}~\citep{patil2025bfcl} & Function Calling & JSON & 4,441 & Apache 2.0 \\
\midrule
\multicolumn{5}{c}{\textit{Multimodal Reasoning}} \\
\midrule
\href{https://huggingface.co/datasets/MMMU/MMMU}{MMMU}~\citep{yue2024mmmu} & Multimodal Understanding & Multi-Choice & 11,550 & Apache 2.0 \\
\href{https://huggingface.co/datasets/AI4Math/MathVista}{MathVista}~\citep{lu2024mathvista} & Mathematical Visual & Mixed & 6,141 & CC BY-SA 4.0 \\
\href{https://huggingface.co/datasets/princeton-nlp/CharXiv}{CharXiv-Reasoning}~\citep{wang2024charxiv} & Chart Understanding & Free-form & 2,323 & CC BY-SA 4.0 \\
\bottomrule
\end{tabular}
}
\caption{Statistics of evaluation datasets used in our ARISE experiments. Datasets span mathematical reasoning, scientific understanding, code generation, agentic tasks, and multimodal reasoning domains.}
\label{tab:dataset_statistics}
\vspace{-1em}
\end{table*}

In our experiments, we selected eleven datasets encompassing a comprehensive range of task types that require sophisticated reasoning capabilities and test-time scaling behaviors. These datasets were specifically chosen to evaluate different aspects of reasoning model performance across mathematical, scientific, coding, agentic, and multimodal domains. Detailed statistics for these datasets are provided in Table~\ref{tab:dataset_statistics}.

\subsection{Mathematical Reasoning Datasets}

\begin{itemize}
\item \textbf{AIME}~\citep{aime} (American Invitational Mathematics Examination) consists of 60 challenging mathematics problems selected from the 2024 and 2025 competitions (30 problems each year). Each problem requires an integer answer between 0 and 999, enabling precise evaluation of mathematical reasoning using Pass@1 as the metric. The problems span advanced topics including algebra, geometry, number theory, combinatorics, and probability, requiring multi-step reasoning and creative problem-solving approaches.

\item \textbf{HMMT}~\citep{balunovic2025srimatharena} (Harvard-MIT Mathematics Tournament) comprises 60 problems from the February 2024 and February 2025 competitions. Unlike AIME's integer-only format, HMMT accepts free-form mathematical expressions including fractions, algebraic expressions, and LaTeX-formatted answers, evaluated using Pass@1 metric. The dataset is categorized by problem type (combinatorics, algebra, geometry, and number theory).
\end{itemize}

\subsection{Scientific Reasoning Datasets}

\begin{itemize}
\item \textbf{GPQA-Diamond}~\citep{rein2024gpqa} (Graduate-Level Google-Proof Q\&A) contains 198 meticulously curated graduate-level science questions designed to be "Google-proof"—questions that cannot be easily answered through simple web searches. Each multiple-choice question has 4 options and was authored by PhD holders and validated by domain experts who achieved 81\% accuracy, while non-experts with internet access reached only 22\% accuracy after 30+ minutes of searching. The questions span biology, physics, and chemistry at the graduate level, with performance measured by accuracy.

\item \textbf{MMLU-Pro}~\citep{wang2024mmlu} represents a substantial enhancement over the original MMLU benchmark, containing 12,032 questions with 10 answer choices (A-J) instead of the traditional 4, reducing random guess probability from 25\% to 10\%. The dataset covers 14 domains including STEM fields, humanities, social sciences, business, health, and law.
\end{itemize}

\subsection{Code Generation Datasets}

\begin{itemize}
\item \textbf{SWE-bench Verified}~\citep{Jimenez2024swe} provides 500 human-validated GitHub issues from 12 popular Python repositories, requiring models to generate code patches in git diff format to resolve real-world software engineering problems. Each issue is categorized by difficulty (15-minute "easy" to 1+ hour "hard" tasks) and has been verified by human developers to ensure solvability and clear problem specifications. The benchmark tests practical software engineering skills including debugging, feature implementation, and code refactoring, with performance measured by resolve rate.

\item \textbf{LiveCodeBench}~\citep{jain2025livecodebench} offers 121 coding problems continuously collected from competitive programming platforms (LeetCode, AtCoder, CodeForces) after major models' training cutoffs, ensuring contamination-free evaluation. Problems require complete, executable Python code generation and are tested using Pass@1. The dynamic nature of the dataset, with problems added monthly, makes it particularly valuable for evaluating true generalization capabilities.
\end{itemize}

\subsection{Agentic Task Datasets}

\begin{itemize}
\item \textbf{$\tau^2$-Bench}~\citep{barres2025tau2} evaluates conversational agents through 279 multi-turn dialogues across retail, airline, and telecom domains. The benchmark uniquely tests dual-control scenarios where both the agent and simulated user can modify shared state through tool calls and API interactions. Success is measured through Pass$^k$ (Pass-Hat-K) metric assessing task completion rates, policy adherence, and database state matching, providing comprehensive assessment of agent coordination and planning capabilities.

\item \textbf{BFCL-v3}~\citep{patil2025bfcl} (Berkeley Function Calling Leaderboard v3) contains 4,441 function calling scenarios testing single-turn, multi-turn, and multi-step interactions. Models must generate properly formatted JSON function calls across Python, Java, and JavaScript, with evaluation using Abstract Syntax Tree (AST) substring matching for single-turn scenarios and combined state-based and response-based evaluation for multi-turn entries. The benchmark includes relevance detection tasks where models must determine when not to call functions, testing both precision and recall in tool use.
\end{itemize}

\begin{figure*}[thp]
    \centering
    \includegraphics[width=\textwidth]{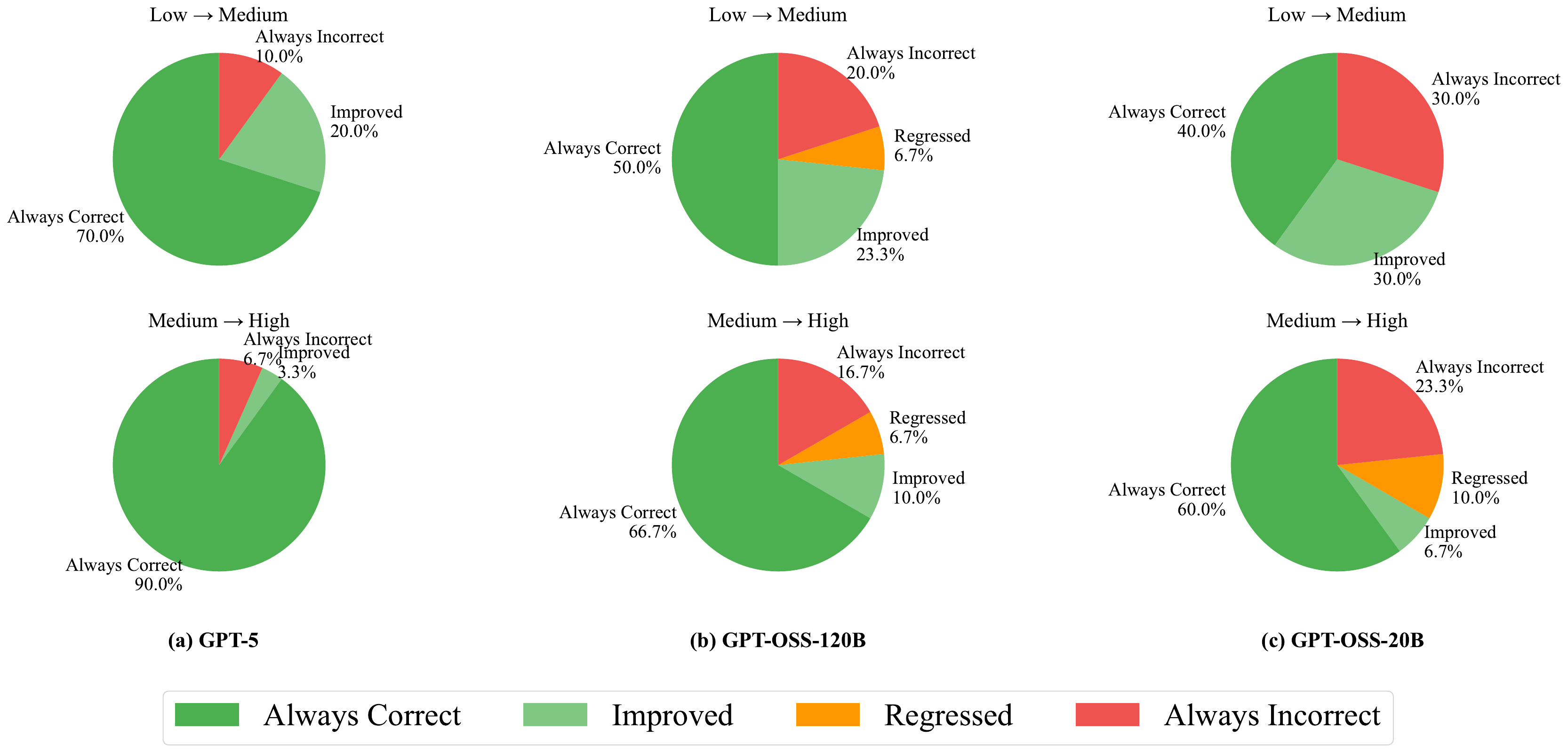}
    \caption{Sample-level accuracy transitions during test-time scaling on AIME dataset. We track individual sample accuracy changes across scaling iterations for GPT-5, GPT-OSS-120B, and GPT-OSS-20B. GPT-OSS models exhibit substantial performance degradation where many initially correct samples become incorrect.}
    \label{fig:aime2025_performance_comparison}
    \vspace{-1em}
\end{figure*}

\subsection{Multimodal Reasoning Datasets}

\begin{itemize}
\item \textbf{MMMU}~\citep{yue2024mmmu} (Massive Multi-discipline Multimodal Understanding) comprises 11,550 college-level questions requiring joint visual and textual reasoning across 30 subjects and 183 subfields. The dataset incorporates over 30 heterogeneous image types including charts, diagrams, maps, tables, chemical structures, and music sheets. Questions span six core disciplines (Art \& Design, Business, Science, Health \& Medicine, Humanities \& Social Science, Tech \& Engineering), testing expert-level multimodal understanding with micro-averaged accuracy as the evaluation metric.

\item \textbf{MathVista}~\citep{lu2024mathvista} provides 6,141 mathematical reasoning problems in visual contexts, supporting both multiple-choice and free-form answers (integer, float, text, or list) evaluated by accuracy. The benchmark tests seven reasoning types: algebraic, arithmetic, geometric, logical, numeric common sense, scientific, and statistical reasoning. Problems are derived from 28 existing datasets plus three newly created sources, emphasizing mathematical reasoning grounded in visual information.

\item \textbf{CharXiv-Reasoning}~\citep{wang2024charxiv} consists of 2,323 high-resolution scientific charts extracted from arXiv preprints, generating 11,615 questions (2,323 reasoning-focused). The dataset emphasizes complex chart understanding requiring synthesis across multiple visual elements, trend analysis, and comparative reasoning, with accuracy as the evaluation metric. Unlike simpler chart QA datasets, CharXiv features realistic scientific visualizations with complex legends, multiple subplots, and domain-specific annotations typical of academic publications.
\end{itemize}

\section{Model Details}
\label{app:model_details}

We evaluate 22 state-of-the-art large reasoning models spanning four major organizations: OpenAI, Anthropic, Alibaba (Qwen), and DeepSeek. These models represent the current frontier of test-time scaling capabilities, ranging from compact 0.6B parameter models to massive 671B parameter mixture-of-experts architectures. Comprehensive specifications including parameter counts, release dates, context windows, and organizational affiliations are provided in Table~\ref{tab:model_statistics}. The evaluated models employ two primary test-time scaling approaches:

\begin{itemize}
    \item \textbf{Effort-based scaling}: Models dynamically adjust computational resources during inference through configurable reasoning effort levels (\texttt{low}, \texttt{medium}, \texttt{high}). This approach, exclusive to OpenAI's o-series models (o1, o3, o3-mini, o4-mini) and open-weight gpt-oss variants (20B, 120B), allows users to explicitly control the depth of reasoning based on task complexity and latency requirements.
    
    \item \textbf{Mode-based scaling}: Models switch between distinct reasoning modes through specialized prompts. All non-OpenAI models in our evaluation, including Anthropic's Claude 4 series, Alibaba's Qwen3 family, and DeepSeek's V3.1, V3.1-Terminus and V3.2-Exp, utilize this approach. These models toggle between standard response generation (\texttt{no-think} or \texttt{chat} mode) and enhanced reasoning with explicit chain-of-thought traces (\texttt{think} or \texttt{reasoner} mode).
\end{itemize}
The diversity in model architectures, from dense transformers to mixture-of-experts, combined with varying test-time scaling mechanisms, provides a comprehensive landscape for evaluating the effectiveness of our proposed ARISE metric across different computational paradigms and reasoning strategies.
\section{Further Analysis}
\label{app:analysis}

\begin{figure*}[thp]
    \centering
    \includegraphics[width=\textwidth]{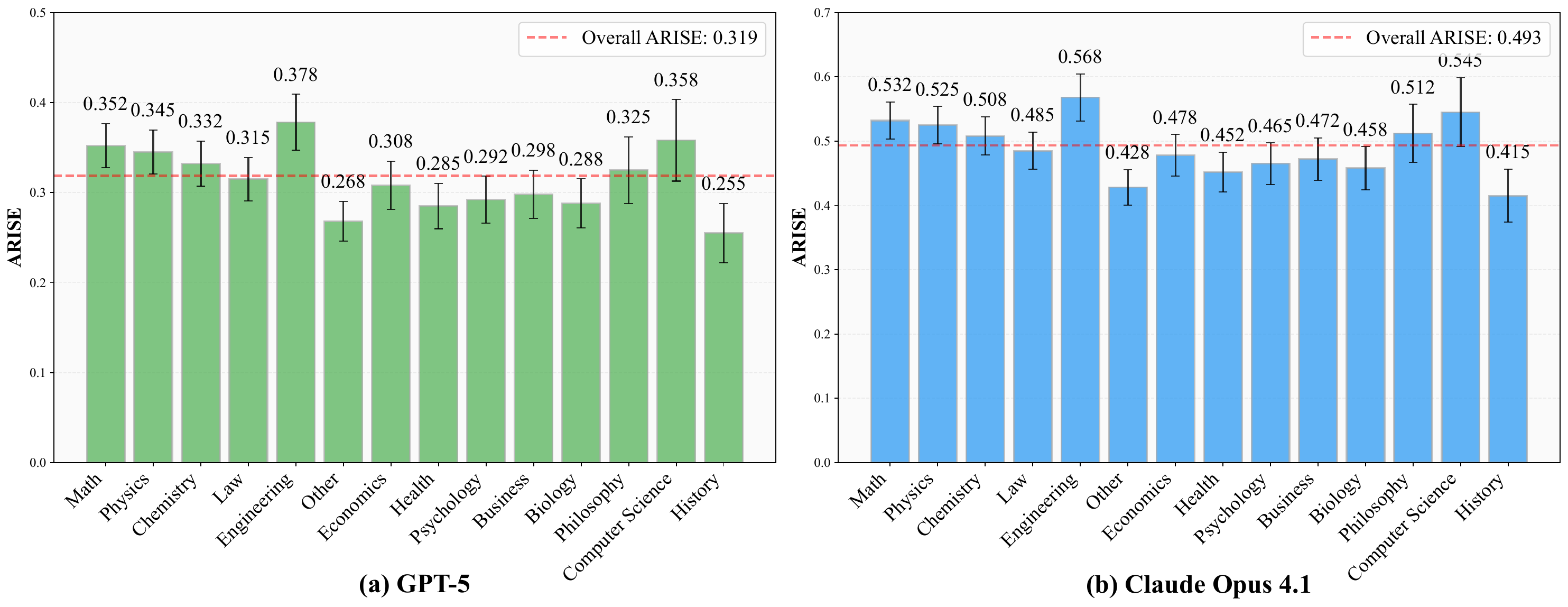}
    \caption{Category-level ARISE analysis across MMLU-Pro disciplines. Evaluation of (a) GPT-5 and (b) Claude Opus 4.1 on 14 distinct knowledge domains demonstrates consistent scaling patterns. Error bars represent standard errors. The red dashed line indicates the overall ARISE score across all categories.}
    \label{fig:mmlu_pro_category_analysis}
    \vspace{-1em}
\end{figure*}

\paragraph{Effective Capture of Performance Degradation During Scaling.} 
Figure~\ref{fig:aime2025_performance_comparison} presents our analysis of sample-level accuracy transitions for GPT series models on the AIME dataset. Through tracking individual sample trajectories across scaling levels, we identify severe performance degradation in GPT-OSS-120B and GPT-OSS-20B. Specifically, GPT-OSS-120B exhibits 6.7\% of samples transitioning from correct to incorrect when scaling from low to medium computational budget, with another 6.7\% degrading during the medium-to-high transition. GPT-OSS-20B demonstrates even more pronounced instability, with 10.0\% of samples experiencing degradation during medium-to-high scaling. In contrast, GPT-5 maintains consistent improvement throughout the scaling process without a single instance of correct samples becoming incorrect. These observations directly explain the significant negative ARISE scores observed for GPT-OSS-120B and GPT-OSS-20B in Table~\ref{tab:arise_scaling_processed}, demonstrating ARISE's unique ability to capture performance degradation that Scaling Metric completely overlooks.

\paragraph{Cross-Task Consistency.}
To further validate ARISE's consistency across different tasks, we conduct fine-grained analysis across MMLU-Pro's 14 discipline categories, spanning diverse domains from technical sciences to humanities. As illustrated in Figure~\ref{fig:mmlu_pro_category_analysis}, both GPT-5 and Claude Opus 4.1 exhibit remarkably consistent scaling patterns despite substantial task heterogeneity. Technical and reasoning-intensive disciplines consistently achieve the highest ARISE values, with Engineering leading for both models (GPT-5: 0.378, Opus 4.1: 0.568), followed by Computer Science (0.358, 0.545) and Mathematics (0.352, 0.532). This pattern suggests that complex problem-solving tasks with multiple solution paths benefit most substantially from test-time computation. Furthermore, the cross-category coefficient of variation remains below 0.15 for both models, confirming ARISE's consistency across diverse knowledge domains.

\begin{figure}[thp]
    \centering
    \includegraphics[width=0.5\textwidth]{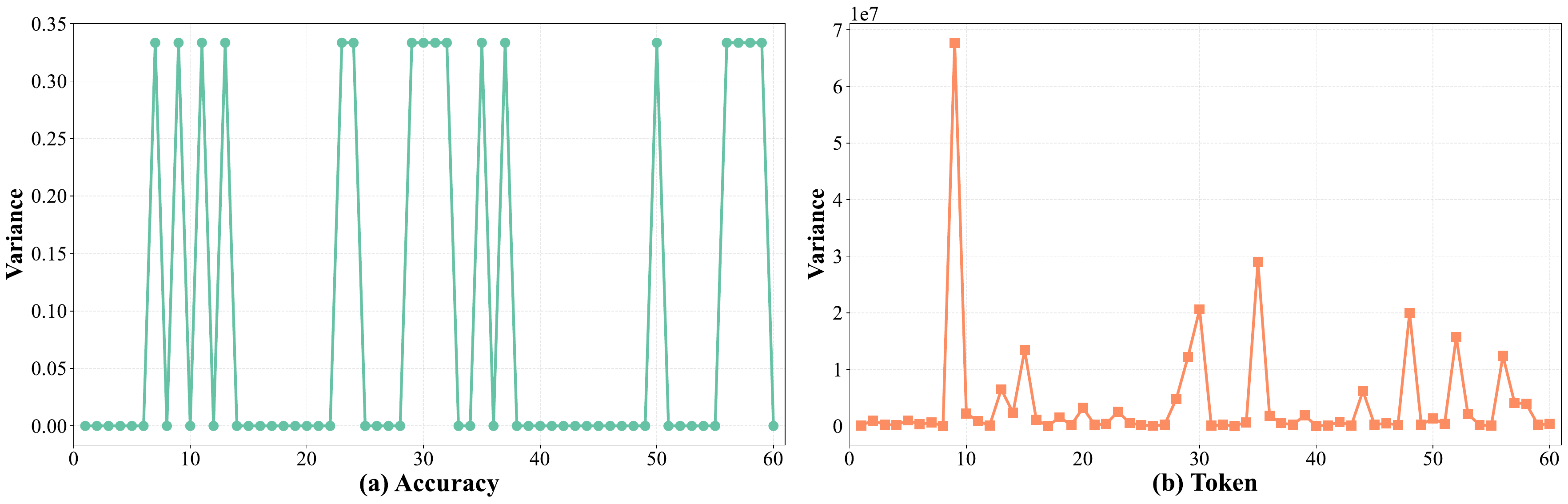}
    \caption{Variance analysis of sample accuracy and completion token counts during test-time scaling. Comparison of (a) accuracy variance and (b) token count variance for GPT-5 evaluated on the AIME dataset across different scaling levels. The completion token variance exhibits an order of magnitude of $10^7$, significantly exceeding the accuracy variance.}
    \label{fig:variance_analysis}
    \vspace{-1em}
\end{figure}

\paragraph{Variance Analysis of Accuracy and Token Counts} 
Figure~\ref{fig:variance_analysis} presents the analysis of the variance in sample accuracy and completion token counts. On the AIME dataset, we observe notable fluctuations in accuracy across samples, which arise from the sampling of both correct and incorrect reasoning paths. While completion token variance exhibits substantial fluctuations in only a small subset of samples, its magnitude is several orders higher than that of accuracy variance. To address this scale disparity, we employ the coefficient of variation (CV) in Equation~\ref{eq:cv}, which normalizes the variance by the mean. Our analysis on the AIME dataset yields $\text{CV}_a = 0.3175$ and $\text{CV}_t = 0.2854$ for GPT-5, demonstrating comparable magnitudes after normalization. This numerical proximity justifies our design choice in Equation~\ref{eq:combined_cv}, where we adopt a summation approach to compute the combined uncertainty metric that integrates both accuracy and token count variability.

% Table 2: Model Statistics
\begin{table*}[t]
\centering

\footnotesize
\resizebox{\textwidth}{!}{%
\begin{tabular}{l|c|c|c|c|c}
\toprule
\textsc{Model} & \textsc{\# Parameters} & \textsc{Release Date} & \textsc{Context Window} & \textsc{Scaling Type} & \textsc{Organization} \\
\midrule
\multicolumn{6}{c}{\textit{OpenAI Models}} \\
\midrule
\href{https://openai.com/o1/}{o1}~\citep{openai2024o1} & / & 2024-09 & 128K & Effort & OpenAI \\
\href{https://openai.com/index/introducing-o3-and-o4-mini/}{o3}~\citep{openai2025o3} & / & 2025-04 & 200K & Effort & OpenAI \\
\href{https://openai.com/index/openai-o3-mini/}{o3-mini}~\citep{openai2025o3mini} & / & 2025-01 & 200K & Effort & OpenAI \\
\href{https://openai.com/index/introducing-o3-and-o4-mini/}{o4-mini}~\citep{openai2025o4mini} & / & 2025-04 & 200K & Effort & OpenAI \\
\href{https://openai.com/index/gpt-5-new-era/}{GPT-5}~\citep{openai2025gpt5} & / & 2025-08 & 400K & Effort & OpenAI \\
\href{https://github.com/openai/gpt-oss}{gpt-oss-20B}~\citep{openai2025gptoss} & 21B (3.6B active) & 2025-08 & 128K & Effort & OpenAI \\
\href{https://github.com/openai/gpt-oss}{gpt-oss-120B}~\citep{openai2025gptoss} & 117B (5.1B active) & 2025-08 & 128K & Effort & OpenAI \\
\midrule
\multicolumn{6}{c}{\textit{Anthropic Models}} \\
\midrule
\href{https://www.anthropic.com/news/claude-3-7-sonnet}{Claude Sonnet 4}~\citep{anthropic2025claude4} & / & 2025-05 & 200K & Mode & Anthropic \\
\href{https://www.anthropic.com/news/claude-4}{Claude Opus 4}~\citep{anthropic2025claude4} & / & 2025-05 & 200K & Mode & Anthropic \\
\href{https://www.anthropic.com/news/claude-opus-4-1}{Claude Opus 4.1}~\citep{anthropic2025claude41} & / & 2025-08 & 200K & Mode & Anthropic \\
\midrule
\multicolumn{6}{c}{\textit{Qwen Models}} \\
\midrule
\href{https://huggingface.co/Qwen/Qwen3-0.6B}{Qwen3-0.6B}~\citep{yang2025qwen3} & 0.6B & 2025-04 & 32K & Mode & Alibaba \\
\href{https://huggingface.co/Qwen/Qwen3-1.7B}{Qwen3-1.7B}~\citep{yang2025qwen3} & 1.7B & 2025-04 & 32K & Mode & Alibaba \\
\href{https://huggingface.co/Qwen/Qwen3-4B}{Qwen3-4B}~\citep{yang2025qwen3} & 4B & 2025-04 & 32K & Mode & Alibaba \\
\href{https://huggingface.co/Qwen/Qwen3-8B}{Qwen3-8B}~\citep{yang2025qwen3} & 8B & 2025-04 & 128K & Mode & Alibaba \\
\href{https://huggingface.co/Qwen/Qwen3-14B}{Qwen3-14B}~\citep{yang2025qwen3} & 14B & 2025-04 & 128K & Mode & Alibaba \\
\href{https://huggingface.co/Qwen/Qwen3-32B}{Qwen3-32B}~\citep{yang2025qwen3} & 32B & 2025-04 & 128K & Mode & Alibaba \\
\href{https://huggingface.co/Qwen/Qwen3-30B-A3B}{Qwen3-30B-A3B}~\citep{yang2025qwen3} & 30B (3B active) & 2025-04 & 128K & Mode & Alibaba \\
\href{https://huggingface.co/Qwen/Qwen3-235B-A22B}{Qwen3-235B-A22B}~\citep{yang2025qwen3} & 235B (22B active) & 2025-04 & 256K & Mode & Alibaba \\
\midrule
\multicolumn{6}{c}{\textit{DeepSeek Models}} \\
\midrule
\href{https://huggingface.co/deepseek-ai/DeepSeek-R1}{DeepSeek-R1}~\citep{deepseekai2025deepseekr1} & 671B (37B active) & 2025-01 & 128K & Mode & DeepSeek \\
\href{https://huggingface.co/deepseek-ai/DeepSeek-V3.1}{DeepSeek-V3.1}~\citep{deepseekai2024deepseekv3} & 671B (37B active) & 2025-08 & 128K & Mode & DeepSeek \\
\href{https://huggingface.co/deepseek-ai/DeepSeek-V3.1-Terminus}{DeepSeek-V3.1-Terminus}~\citep{deepseekai2024deepseekv3} & 671B (37B active) & 2025-09 & 128K & Mode & DeepSeek \\
\href{https://huggingface.co/deepseek-ai/DeepSeek-V3.2-Exp}{DeepSeek-V3.2-Exp}~\citep{deepseekai2024deepseekv3} & 671B (37B active) & 2025-09 & 128K & Mode & DeepSeek \\
\bottomrule
\end{tabular}
}
\caption{Statistics of large reasoning models. Models are categorized by their test-time scaling approach: effort-based (adjustable reasoning levels), and mode-based (thinking mode switching).}
\label{tab:model_statistics}
\vspace{-1em}
\end{table*}
\begin{table*}[htbp]
\centering
\footnotesize
\begin{tabular}{l@{\hspace{0.3em}}rr@{\hspace{0.5em}}rr@{\hspace{0.5em}}rr@{\hspace{0.5em}}rr}
\toprule
\multirow{2}{*}{\textbf{Model}} & \multicolumn{2}{c}{\textbf{AIME}} & \multicolumn{2}{c}{\textbf{HMMT}} & \multicolumn{2}{c}{\textbf{GPQA Diamond}} & \multicolumn{2}{c}{\textbf{MMLU-Pro}} \\
\cmidrule(lr){2-3} \cmidrule(lr){4-5} \cmidrule(lr){6-7} \cmidrule(lr){8-9}
 & ARISE & SM & ARISE & SM & ARISE & SM & ARISE & SM \\
\midrule
o1                      & 0.134563 & 0.000060 & 0.127678 & 0.000018 & 0.122789 & 0.000038 & 0.150899 & 0.000050 \\
o3                      & 0.299253 & 0.000078 & 0.167327 & 0.000018 & 0.212377 & 0.000058 & 0.278925 & 0.000056 \\
o3-mini                 & 0.130586 & 0.000037 & 0.188770 & 0.000030 & 0.164882 & 0.000022 & 0.166285 & 0.000041 \\
o4-mini                 & 0.240192 & 0.000034 & 0.167306 & 0.000043 & 0.199441 & 0.000042 & 0.208032 & 0.000039 \\
\midrule
gpt-oss-20B            & -0.402954 & 0.000020 & -0.312641 & 0.000016 & -0.327418 & 0.000022 & -0.269428 & 0.000022 \\
gpt-oss-120B           & -0.333963 & 0.000027 & -0.199902 & 0.000024 & -0.273421 & 0.000028 & -0.161542 & 0.000031 \\
gpt-5                  & 0.156599 & 0.000025 & 0.299576 & 0.000026 & 0.218496 & 0.000026 & 0.318572 & 0.000029 \\
\midrule
Claude Sonnet 4        & 0.104098 & 0.000046 & 0.040367 & 0.000010 & 0.063612 & 0.000032 & 0.105911 & 0.000026 \\
Claude Opus 4          & 0.347516 & 0.000065 & 0.171656 & 0.000061 & 0.221169 & 0.000048 & 0.332988 & 0.000067 \\
Claude Opus 4.1        & \textbf{0.452939} & \textbf{0.000146} & \textbf{0.470892} & \textbf{0.000141} & \textbf{0.445388} & \textbf{0.000141} & \textbf{0.493213} & \textbf{0.000203} \\
\midrule
Qwen-3-0.6B            & 0.293593 & 0.000002 & 0.176909 & 0.000007 & 0.211447 & 0.000005 & 0.271553 & 0.000003 \\
Qwen-3-1.7B            & 0.365808 & 0.000027 & 0.274632 & 0.000025 & 0.323745 & 0.000032 & 0.391681 & 0.000034 \\
Qwen-3-4B              & 0.216582 & 0.000049 & 0.221653 & 0.000037 & 0.239971 & 0.000032 & 0.256920 & 0.000061 \\
Qwen-3-8B              & 0.308547 & 0.000034 & 0.301039 & 0.000026 & 0.302231 & 0.000035 & 0.327433 & 0.000035 \\
Qwen-3-14B             & 0.324731 & 0.000068 & 0.221253 & 0.000047 & 0.259408 & 0.000044 & 0.312000 & 0.000080 \\
Qwen-3-32B             & 0.388342 & 0.000076 & 0.203790 & 0.000058 & 0.264353 & 0.000057 & 0.399234 & 0.000065 \\
Qwen3-30B-A3B          & 0.329273 & 0.000050 & 0.385471 & 0.000074 & 0.372684 & 0.000043 & 0.416213 & 0.000069 \\
Qwen3-235B-A22B        & 0.391492 & 0.000081 & 0.430555 & 0.000041 & 0.406943 & 0.000076 & 0.453300 & 0.000079 \\
\midrule
Deepseek-R1            & -0.031810 & 0.000007 & -0.045531 & 0.000004 & -0.049256 & 0.000003 & -0.010844 & 0.000002 \\
V3.1          & 0.396647 & 0.000035 & 0.204811 & 0.000036 & 0.271420 & 0.000042 & 0.355928 & 0.000036 \\
V3.1-Terminus & 0.324070 & 0.000023 & 0.283478 & 0.000035 & 0.309142 & 0.000031 & 0.322799 & 0.000031 \\
V3.2-Exp     & 0.302922 & 0.000027 & 0.265136 & 0.000033 & 0.272557 & 0.000020 & 0.321917 & 0.000029 \\
\bottomrule
\end{tabular}
\vspace{-.5em}
\caption{Performance of mainstream models in mathematical and scientific reasoning. Each benchmark shows ARISE scores and corresponding Scaling Metrics (SM). V3.1, V3.1-Terminus, and V3.2-Exp are all DeepSeek series models. The Scaling Metric shows relatively small values with the first three digits typically being zeros.}
\label{tab:arise_scaling}
\vspace{-1.5em}
\end{table*}
\begin{table*}[htbp]
\centering
\footnotesize
\begin{tabular}{l@{\hspace{0.3em}}rr@{\hspace{0.5em}}rr@{\hspace{0.5em}}rr@{\hspace{0.5em}}rr}
\toprule
\multirow{2}{*}{\textbf{Model}} & \multicolumn{2}{c}{\textbf{SWE-bench Verified}} & \multicolumn{2}{c}{\textbf{LiveCodeBench}} & \multicolumn{2}{c}{\textbf{$\tau$2-Bench}} & \multicolumn{2}{c}{\textbf{BFCL-v3}} \\
\cmidrule(lr){2-3} \cmidrule(lr){4-5} \cmidrule(lr){6-7} \cmidrule(lr){8-9}
 & ARISE & SM & ARISE & SM & ARISE & SM & ARISE & SM \\
\midrule
o1                      & 0.120531 & 0.000048 & 0.104464 & 0.000025 & 0.097844 & 0.000016 & 0.090596 & 0.000019 \\
o3                      & 0.242285 & 0.000041 & 0.227926 & 0.000033 & 0.188289 & 0.000025 & 0.163869 & 0.000019 \\
o3-mini                 & 0.169071 & 0.000030 & 0.147321 & 0.000027 & 0.106837 & 0.000025 & 0.116984 & 0.000025 \\
o4-mini                 & 0.196871 & 0.000037 & 0.191874 & 0.000027 & 0.169686 & 0.000035 & 0.170538 & 0.000033 \\
\midrule
gpt-oss-20B            & -0.326036 & 0.000023 & -0.336262 & 0.000016 & -0.384107 & 0.000007 & -0.374236 & 0.000009 \\
gpt-oss-120B           & -0.209097 & 0.000025 & -0.234670 & 0.000030 & -0.240233 & 0.000017 & -0.238353 & 0.000029 \\
gpt-5                  & 0.239273 & 0.000022 & 0.261642 & 0.000026 & 0.174972 & 0.000012 & 0.233048 & 0.000021 \\
\midrule
Claude Sonnet 4        & 0.092875 & 0.000033 & 0.067902 & 0.000030 & 0.067379 & 0.000028 & 0.064168 & 0.000021 \\
Claude Opus 4          & 0.305950 & 0.000053 & 0.289900 & 0.000054 & 0.232681 & 0.000024 & 0.229159 & 0.000035 \\
Claude Opus 4.1        & 0.480285 & 0.000133 & 0.483796 & 0.000129 & 0.439709 & 0.000078 & 0.431055 & 0.000079 \\
\midrule
Qwen-3-0.6B            & 0.270375 & 0.000001 & 0.213543 & 0.000004 & 0.170712 & 0.000001 & 0.208552 & 0.000005 \\
Qwen-3-1.7B            & 0.334988 & 0.000033 & 0.305905 & 0.000028 & 0.291700 & 0.000018 & 0.286917 & 0.000022 \\
Qwen-3-4B              & 0.230608 & 0.000039 & 0.247055 & 0.000031 & 0.189426 & 0.000022 & 0.182345 & 0.000025 \\
Qwen-3-8B              & 0.306297 & 0.000040 & 0.301868 & 0.000026 & 0.271688 & 0.000023 & 0.283367 & 0.000020 \\
Qwen-3-14B             & 0.289547 & 0.000064 & 0.269849 & 0.000048 & 0.236769 & 0.000039 & 0.277388 & 0.000033 \\
Qwen-3-32B             & 0.350025 & 0.000056 & 0.339766 & 0.000055 & 0.235912 & 0.000077 & 0.292756 & 0.000049 \\
Qwen3-30B-A3B          & 0.385370 & 0.000051 & 0.387557 & 0.000067 & 0.304281 & 0.000056 & 0.344193 & 0.000041 \\
Qwen3-235B-A22B        & 0.446157 & 0.000049 & 0.430845 & 0.000062 & 0.389649 & 0.000068 & 0.370387 & 0.000062 \\
\midrule
Deepseek-R1            & -0.028327 & 0.000003 & -0.012076 & 0.000010 & -0.035678 & 0.000007 & -0.051757 & 0.000002 \\
DeepSeek-V3.1          & 0.358224 & 0.000040 & 0.331861 & 0.000045 & 0.278489 & 0.000018 & 0.292173 & 0.000029 \\
DeepSeek-V3.1-Terminus & 0.329678 & 0.000021 & 0.310457 & 0.000029 & 0.260189 & 0.000025 & 0.297023 & 0.000020 \\
DeepSeek-V3.2          & 0.316112 & 0.000029 & 0.282013 & 0.000040 & 0.246096 & 0.000030 & 0.256818 & 0.000029 \\
\bottomrule
\end{tabular}
\vspace{-.5em}
\caption{Performance of mainstream models on code generation and agentic benchmarks. Each benchmark shows ARISE scores and corresponding Scaling Metrics (SM). DeepSeek-V3.1, V3.1-Terminus, and V3.2 refer to the DeepSeek series models.}
\label{tab:arise_scaling_code_agentic}
\vspace{-1.5em}
\end{table*}
\end{document}